\documentclass[a4paper]{article}
\usepackage[margin=1.2in]{geometry}
\usepackage{amsmath}
\usepackage{algorithmic}
\usepackage{algorithm}
\usepackage{amsthm}
\usepackage{amsfonts}
\usepackage{comment}
\usepackage{graphicx}
\usepackage{hyperref}
\usepackage{color}
\usepackage{algorithm,algorithmic}

\newtheorem{thm}{Theorem}
\newtheorem{definition}{Definition}
\newtheorem{lemma}{Lemma}

\newtheorem{remark}{Remark}
\newtheorem{cor}[thm]{Corollary}

\usepackage[mathscr]{euscript}

\usepackage[mathscr]{euscript}

\usepackage{algorithm,algorithmic}

\usepackage{hyperref}
\hypersetup{
colorlinks=true,
citecolor=blue,
linkbordercolor={1  0 1}, 
citebordercolor={1 1 1} 
}

\def \R {\mathbb{R}}
\def \x {\mathbf{x}}
\def \E {\mathrm{E}}

\def \S {\mathcal{S}}

\def \sgn {\mbox{sign}}

\def \sgn {\mbox{sgn}}

\def \Rh {\widehat{R}}
\def \fh {\widehat{f}}

\def \X {\mathcal{X}}
\def \Y {\mathcal{Y}}
\def \F {\mathcal{F}}
\def \H {\mathcal{H}}

\def \E {\mathbb{E}}

\def \Ex {\mathscr{E}}

\def \F {\mathscr{F}}
\def \X {\mathscr{X}}
\def \Y {\mathscr{Y}}
\def \H {\mathscr{H}}

\def \f \mathscr{f}
\def \S {\mathscr{S}}
\def \Pxy {P_{\X \times \Y}}
\def \Rp {R_{\phi}}
\sloppy

\title{Binary Excess Risk for Smooth Convex Surrogates}

\author{Mehrdad Mahdavi \\
\small{Michigan State University} \\ 
\small{\texttt{mahdavim@cse.msu.edu}}
\and {Lijun Zhang} \\
\small{Michigan State University} \\ 
\small{\texttt{zhanglij@msu.edu}} 
\and Rong Jin\\  
\small{Michigan State University} \\
\small{\texttt{rongjin@cse.msu.edu}} 
}
\date{}
\sloppy
\begin{document} 
\maketitle

\begin{abstract}
In  statistical learning theory, convex surrogates of the  0-1 loss are highly preferred because of the computational  and theoretical virtues that convexity brings in.  This is  of more importance if we consider smooth   surrogates  as  witnessed by the fact that the smoothness  is  further beneficial both computationally- by attaining an {\it optimal}  convergence rate for optimization, and in a statistical sense- by providing an improved {\it optimistic} rate for generalization bound. In this paper we investigate the smoothness property from the viewpoint of statistical consistency and show  how it affects the binary excess risk. We show that in contrast to optimization   and generalization errors  that favor the choice of smooth surrogate loss, the smoothness of loss function may degrade the binary excess risk. Motivated by this negative result, we provide a unified analysis that integrates optimization error, generalization bound, and the error in translating convex excess risk into a binary excess risk when examining the impact of smoothness on the binary excess risk.  We  show that under favorable conditions appropriate choice of smooth convex loss will result in a binary excess risk that is better than $O(1/\sqrt{n})$. 
\end{abstract}

%
\section{Introduction}
In statistical binary classification, we are given an instance space $\X$, and a space $\Y =  \{-1,+1\}$ of labels  representing which of two classes an instance belongs to.  We assume that the product space $\X \times \Y$ is endowed with a probability measure $\Pxy$ that  is unknown to the learner,   however,  it is possible to sample  an arbitrary finite number of independent and identically distributed  observations $\S = \left((\x_1, y_1), (\x_2,y_2), \cdots, (\x_n,y_n) \right) \in \left(\X\times\Y\right)^n$ from $\Pxy$. The goal is to output a classifier $f: \X \mapsto \Y$ from a predefined  class of functions $\F$  that does well on unseen examples coming from the same distribution.  The most natural loss function to measure the performance of a classifier   is the misclassification error rate or the 0-1 (binary) loss  $\ell_{0-1}(f;(\x,y)) = \mathbb{I} [y f(\x) \leq 0]$, where  $\mathbb{I}[z]$ is the indicator function that outputs $1$ if $z$ is true and zero, otherwise. We are  interested, with the help of labeled examples  $\S$,  in finding a classifier $f \in \F$ that minimizes the {\it binary excess risk},
\[
R(f) - R^* = \Pr_{(\x,y)\sim\Pxy}\left\{yf(\x) \leq 0\right\} - \min\limits_{g} \Pr_{{(\x,y)\sim\Pxy}}\left\{yg(\x) \leq 0\right\}.
\]
Many studies of binary excess risk assume that the optimal classifier $f \in \F$ is learned by minimizing the
empirical binary risk, $\min_{f \in \F} \frac{1}{n}\sum_{i=1}^n \mathbb{I}[y_if(\x_i) \leq 0]$, an approach that is usually referred to as Empirical Risk Minimization (ERM)~\cite{vapnik1998statistical}. To understand the generalization performance of the classifier learned by ERM, it is important to have upper bounds on the excess risk of the empirical minimizer that hold with a high probability and that take into account complexity measures of classification functions. It is well known that, under certain conditions, direct empirical classification error minimization is consistent~\cite{vapnik1998statistical}  and achieves a fast convergence rate under low noise situations~\cite{mammen1999smooth}.

One shortcoming of the ERM based approaches is that they need to minimize  0-1 loss, leading to non-convex optimization problems that are potentially NP-hard~\footnote{We note that several works~\cite{kalai2008agnostically,kalai2009isotron} provide efficient algorithms for direct 0-1 empirical error minimization but under strong (unrealistic) assumptions on data distribution or label generation.}~\cite{arora1993hardness,hoffgen1995robust}.   A common practice  to circumvent this difficulty is to replace the indicator function $\mathbb{I} [\cdot \leq 0]$ with some convex loss  $\phi(\cdot)$ and find the optimal solution by minimizing the convex surrogate loss. Examples of such surrogate loss functions for 0-1 loss  include  logit loss $\phi_{\log}(f; (\x,y)) = \log (1+\exp(-y f(\x)))$ in logistic regression~\cite{friedman2000additive},  hinge loss $\phi_{\text{hing}}(f; (\x,y)) = \max(0, 1- y f(\x))$ in support vector machine (SVM)~\cite{cortes1995support} and exponential loss $\phi_{\text{exp}}(f; (\x,y)) = \exp(-y f(\x))$ in AdaBoost~\cite{freund1995desicion}.

It is known that under  mild conditions, the classifier learned by minimizing the empirical loss of convex surrogate is consistent to the Bayes classifier ~\cite{zhang2004statistical,lugosi2004bayes,jiang2004process,lin2004note,steinwart2005consistency,bartlett-2006-convexity}.  For instance, it was shown in~\cite{bartlett-2006-convexity} that the necessary and sufficient condition for a convex loss $\phi(\cdot)$ to be consistent with the binary loss is that $\phi(\cdot)$ is  differentiable at origin and $\phi'(0) < 0$. It was further established in the same work that the binary excessive risk can be upper bound by the convex excess risk through a $\psi$-transform that depends on the surrogate convex loss $\phi(\cdot)$.

Since the choice of convex surrogates could significantly affect the binary excess risk, in this work, we will investigate the impact of the smoothness of a convex loss function on the binary excess risk. This is motivated by the recent results that show the advantages of using smooth convex surrogates in reducing the optimization complexity and the generalization error bound. More specifically, ~\cite{nesterov2004introductory,tseng:2009:accelerated} show that a faster convergence rate (i.e. $O(1/T^2)$) can be achieved by first order methods when the objective function to be optimized is convex and smooth; in~\cite{srebro:2010:smoothness}, the authors show that a smooth convex loss will lead to a better optimistic generalization error bound. Given the positive news of using smooth convex surrogates, an open research question is how the smoothness of a convex surrogate will affect the binary excess risk. The answer to this question, as will be revealed later, is negative: the smoother the convex loss, the poorer approximation will be for the binary excess risk. Thus, the second contribution of this work is to integrate  these results for smooth convex losses, and examine the overall effect of replacing 0-1 loss with a smooth convex loss when taking into account three sources of errors, i.e. the optimization error, the generalization error, and the error in translating the convex excess risk into the binary risk. As we will show, under favorable conditions, appropriate choice of smooth convex loss will result a binary excess risk better than $O(1/\sqrt{n})$.

\vspace{-0.1cm}
\paragraph{Outline}{The reminder of this paper is organized as follows. In Section~\ref{sec:setting} we set up notation and describe the setting. Section~\ref{sec:calibration} briefly discusses the classification-calibrated convex surrogate losses  on which our analysis relies. We derive the $\psi$-transform for smoothed hinge loss and elaborate  its binary excess risk in Section~\ref{sec:smoothed-hinge}. Section~\ref{sec:analysis} provides a unified analysis of three types of errors and derives conditions  in terms of smoothness to obtain better rates for the binary excess risk. Section~\ref{sec:conclusion} concludes the paper and the appendix contains the omitted proofs. }

%
\section{Problem Setting} \label{sec:setting}
Let $\S = \left((\x_1, y_1), (\x_2, y_2), \cdots, (\x_n, y_n) \right)$ be a set of i.i.d. samples drawn from an unknown distribution $\Pxy$ over $\X\times \{-1, +1\}$, where $\x_i \in \X \subseteq \R^d$ is an instance and $y_i \in \{-1, +1\}$ is the binary class assignment for $\x_i$. Let $\kappa(\cdot, \cdot)$ be an universal kernel  and let $\H$ be the Reproducing Kernel Hilbert Space (RKHS) endowed with kernel $\kappa(\cdot, \cdot)$. According to~\cite{zhou2003capacity}, $\H$ is a rich function space whose closure includes all the smooth functions.  We consider  predictors from $\H$ with bounded norm to form the measurable function class $\F = \{f \in \H: \|f\|_{\H} \leq B\}$. Given a  function  $f:\X \mapsto \R$, define the risk of $f$  as $R(f) = \E_{(\x,y)\sim\Pxy}\left[ \mathbb{I} [yf(\x) \leq 0]\right] = \Pr\left\{yf(\x) \leq 0 \right\}$. Let $f^*$ be the optimal classifier that attains the minimum risk, i.e. $f = \arg\min_{f} \Pr\left\{yf(\x) \leq 0 \right\}$. We assume $f^* \in \H$ with $\|f^*\|_{\H} \leq B$. This boundedness condition  is satisfied for any RKHS with a bounded kernel (i.e. $\sup_{\x \in \X} \kappa(\x, \x) \leq B$). Henceforth, let $R^*$ stand for the minimum achievable risk by the optimal classifier $f^*$, i.e., $R^* = R(f^*)$. Define the \textit{binary excess risk} for a prediction function $f \in \F$ as
$$\Ex(f) = R(f) - R^*.$$
Our goal is to efficiently learn a prediction function $f \in \F$ from the training examples  in $\S$ that minimizes the binary excess risk $\Ex(f)$. As already mentioned, for computational convenience, we work with a convex loss function rather than the 0-1 loss function.  Given a convex surrogate loss function $\phi: \R \mapsto \R_{+}$ (e.g., hinge loss, exponential loss, or logistic loss) we define the risk with respect to the convex loss $\phi$ (convex risk or $\phi$-risk) as $$\Rp(f) = \E_{(\x,y)\sim\Pxy}[\phi(yf(\x))].$$ Similarly we define the \textit{optimal} $\phi$-risk  as $\Rp^* = \inf_{f \in \F} \E_{(\x,y)\sim \Pxy}[\phi(yf(\x))]$. The \textit{excess $\phi$-risk} or \textit{convex excess  risk} of a classifier $f \in \F$ with respect to the convex surrogate loss $\phi(\cdot)$ is defined as
$$\Ex_{\phi}(f) = R_{\phi}(f) - R_{\phi}^*.$$
An important line of research in statistical learning theory focused on relating the convex excess risk $\Ex_{\phi}(f)$ to the binary excess risk $\Ex(f)$ that will be elaborated in next section.
%
\section{Classification Calibration and Surrogate Risk Bounds}\label{sec:calibration}
Although it is computationally convenient to minimize the empirical risk based on a convex surrogate, the ultimate goal of any classification method is to find a function $f$ that minimizes the binary loss. Therefore, it is crucial to investigate the conditions which guarantee that if the $\phi$-risk of $f$ gets close to the optimal $R_{\phi}^*$, the binary risk of $f$ will also approach the optimal binary risk $R^*$.  This question has been an active trend in statistical learning theory over the last decade where the necessary and sufficient conditions have been established for relating the binary excess risk to a convex excess risk ~\cite{zhang2004statistical,lugosi2004bayes,jiang2004process,lin2004note,steinwart2005consistency,bartlett-2006-convexity}.

In this paper we follow the strategy introduced in~\cite{bartlett-2006-convexity} in order
to relate the binary excess risk to the excess $\phi$-risk. Their methodology, through  the notion of    \textit{classification calibration}, allows us to find quantitative
relationship between the excess risk associated with $\phi$ and the excess risk associated with 0-1 loss.  It is  established in~\cite{bartlett-2006-convexity} that the binary excessive risk can be bounded by the convex excess risk, based on the convex loss function $\phi$, through a $\psi$-transform.

\begin{definition} Given a loss function $\phi: \R \mapsto [0, \infty)$, define the function  $\psi: [0,1] \mapsto [0, \infty)$ by
$$\tilde{\psi} (z) = H^-\left(\frac{1+z}{2}\right) - H\left(\frac{1+z}{2}\right)$$
where
\[
H^-(\eta)=\inf_{\alpha:\alpha(2\eta-1)\leq 0}\left( \eta \phi(\alpha)+(1-\eta) \phi(-\alpha)\right)\textrm{ and } H(\eta)= \inf_{\alpha\in\R} \left( \eta \phi(\alpha)+(1-\eta) \phi(-\alpha)\right).
\]
The transform function $\psi: [0,1]\mapsto [0,\infty)$ is defined to be the convex closure of $\tilde{\psi}$.
\end{definition}
The following theorem from~\cite[Theorem 1]{bartlett-2006-convexity} shows that the binary excess risk can be bounded by the convex excess risk using transform function $\psi: [0,1] \mapsto [0, \infty)$ that depends on the surrogate convex loss function.
\begin{thm} \label{thm:classification-caliberated} For any non-negative loss function $\phi(\cdot)$, any measurable function $f \in \F$, and any probability distribution $\Pxy$ on $\X\times\Y$, there is  a nondecreasing function $\psi: [0,1] \mapsto [0, \infty)$ that
\begin{eqnarray}\label{eqn:bound-phi}
\psi(R(f) - R^*) \leq R_{\phi}(f) - R_{\phi}^*
\end{eqnarray}
holds. Here the minimization is taken over all measurable functions.
\end{thm}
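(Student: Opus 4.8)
The plan is to reduce \eqref{eqn:bound-phi} to a pointwise (conditional) inequality and then reassemble it using Jensen's inequality, in the spirit of \cite{bartlett-2006-convexity}. Let $\eta(\x)=\Pr(y=1\mid\x)$, and for $\alpha\in\R$ write $C_\eta(\alpha)=\eta\,\phi(\alpha)+(1-\eta)\,\phi(-\alpha)$ for the conditional $\phi$-risk of predicting the value $\alpha$ at a point with conditional label probability $\eta$. Then $\Rp(f)=\E_{\x}\big[C_{\eta(\x)}(f(\x))\big]$, and since the minimizations in the theorem are over all measurable functions, interchanging infimum and integration (via a measurable choice of near-minimizing $\alpha$) gives $\Rp^*=\E_{\x}\big[H(\eta(\x))\big]$, so that $\Ex_{\phi}(f)=\E_{\x}\big[C_{\eta(\x)}(f(\x))-H(\eta(\x))\big]$. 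Likewise the conditional $0$–$1$ risk of predicting $\alpha$ is $\eta\,\ind[\alpha\le 0]+(1-\eta)\,\ind[\alpha\ge 0]$, with minimum $\min(\eta,1-\eta)$ over $\alpha$; hence $\Ex(f)=\E_{\x}[g(\x)]$, where $g(\x)\in[0,1]$ is the conditional binary excess risk at $\x$, equal to $0$ when $f(\x)$ carries the Bayes-optimal sign $\sgn(2\eta(\x)-1)$ and to $|2\eta(\x)-1|$ otherwise (resolving the sign of $f(\x)$ at $0$ by the usual convention).

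The crux is the pointwise bound $\tilde{\psi}\big(g(\x)\big)\le C_{\eta(\x)}(f(\x))-H(\eta(\x))$, which I would prove by cases. Fix $\x$ and abbreviate $\eta=\eta(\x)$, $\alpha=f(\x)$. If $\alpha$ carries the Bayes-optimal sign, then $g(\x)=0$ and $\tilde{\psi}(0)=H^-(1/2)-H(1/2)=0$ (at $\eta=1/2$ the constraint $\alpha(2\eta-1)\le 0$ is vacuous), which is $\le C_\eta(\alpha)-H(\eta)$ since the latter is nonnegative. Otherwise $\alpha(2\eta-1)\le 0$, so $\alpha$ is feasible in the infimum defining $H^-(\eta)$; therefore $C_\eta(\alpha)\ge H^-(\eta)$ and $C_\eta(\alpha)-H(\eta)\ge H^-(\eta)-H(\eta)=\tilde{\psi}(2\eta-1)$. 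The change of variables $\alpha\mapsto-\alpha$ shows $H(\eta)=H(1-\eta)$ and $H^-(\eta)=H^-(1-\eta)$, so $\tilde{\psi}$ is even and $\tilde{\psi}(2\eta-1)=\tilde{\psi}(|2\eta-1|)=\tilde{\psi}(g(\x))$, which finishes this step.

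To pass from the pointwise statement to the theorem, recall from the definition that $\psi$ is the convex closure of $\tilde{\psi}$, so $\psi\le\tilde{\psi}$ on $[0,1]$; combining this with the pointwise bound and integrating over $\x$ gives $\E_{\x}[\psi(g(\x))]\le\Ex_{\phi}(f)$, and Jensen's inequality for the convex function $\psi$ then yields $\psi\big(\Ex(f)\big)=\psi\big(\E_{\x}[g(\x)]\big)\le\E_{\x}[\psi(g(\x))]\le\Ex_{\phi}(f)$, i.e.\ \eqref{eqn:bound-phi}. One still checks that $\psi$ is nondecreasing with values in $[0,\infty)$: taking $\alpha=0$ in the definition of $H^-$ shows $\tilde{\psi}$ takes values in $[0,\phi(0)]$, hence its convex closure $\psi$ is finite, dominates the affine minorant $z\mapsto 0$ (so $\psi\ge 0$), and satisfies $\psi(0)=0$ (evaluate the biconjugate at $0$); and a convex nonnegative function on $[0,1]$ vanishing at the origin is automatically nondecreasing, since for $0<a<b\le 1$ convexity along $[0,b]$ gives $\psi(a)\le(a/b)\,\psi(b)\le\psi(b)$.

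I expect the pointwise reduction, not the reassembly, to be the main obstacle. Justifying $\Rp^*=\E_{\x}[H(\eta(\x))]$ and $R^*=\E_{\x}[\min(\eta(\x),1-\eta(\x))]$ rigorously requires a measurable-selection argument for near-minimizers; and because the convention $R(f)=\Pr\{yf(\x)\le 0\}$ would make the conditional $0$–$1$ loss equal $1$ on the tie set $\{f(\x)=0\}$ — too large for the pointwise bound to hold there — that set must be handled via the standard convention that assigns it a definite predicted label, after which $g(\x)\in\{0,\,|2\eta(\x)-1|\}$ as used above. With these points settled, the convex-closure inequality, the integration, and Jensen's inequality are routine.
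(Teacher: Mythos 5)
Your proof is correct: the paper itself gives no argument for this theorem, simply importing it from Bartlett, Jordan and McAuliffe (2006), and your reconstruction (conditional-risk decomposition, the pointwise bound $\tilde{\psi}(|2\eta-1|)\le H^-(\eta)-H(\eta)\le C_\eta(\alpha)-H(\eta)$ via feasibility of a wrongly-signed $\alpha$, then $\psi\le\tilde{\psi}$ and Jensen) is exactly the standard proof from that reference. You are also right to flag the two genuine technicalities — the measurable-selection step behind $R_\phi^*=\E_{\x}[H(\eta(\x))]$ and the tie set $\{f(\x)=0\}$, where the paper's convention $R(f)=\Pr\{yf(\x)\le 0\}$ must be replaced by a definite sign convention for the pointwise bound to hold.
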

\begin{definition} A convex loss $\phi$ is classification-calibrated if, for any $\eta \neq 1/2$,
\[
H^-(\eta) > H(\eta).
\]
\end{definition}
This condition is essentially an extension of~\cite[Theorem 2.1]{zhang2004statistical} and can be viewed as a form of Fisher consistency that is appropriate for classification.

It has been shown  in~\cite{bartlett-2006-convexity} that the necessary and sufficient condition for a convex loss $\phi(z)$ to be classification-calibrated  is if it is differentiable at the origin  and $\phi'(0) < 0$. In particular, for a certain convex function $\phi(\cdot)$, the $\psi$-transform can be computed by
\begin{eqnarray*}
\psi(z) = \inf\limits_{\alpha z \leq 0}\left(\frac{1 + z}{2}\phi(\alpha) + \frac{1 - z}{2}\phi(-\alpha)\right) - \inf\limits_{\alpha \in \R}\left(\frac{1 + z}{2}\phi(\alpha) + \frac{1 - z}{2}\phi(-\alpha)\right),
\end{eqnarray*}
that can be further simplified as $\psi(z) = \phi(0)-H\left(\frac{1+z}{2}\right)$ when $\phi$ is classification-calibrated.  Examples of $\psi$-transform for the convex surrogate functions of known practical algorithms mentioned before are as follows: (i) for hinge loss $\phi(\alpha) = \max(0, 1-\alpha)$ , $\psi(z) = |z|$, (ii) for exponential loss $\phi(\alpha)=e^{-\alpha}$, $\psi(z) = 1 - \sqrt{1 - z^2} \geq z^2/2$, and (iii) for truncated quadratic loss $\phi(\alpha) = [\max(0, 1 - \alpha)]^2$, $\phi(z) = z^2$.

\begin{remark} We note that the inequality in (\ref{eqn:bound-phi}) provides insufficient guidance on choosing appropriate loss function. First, it does not measure explicitly how the choice of the convex surrogate $\phi(\cdot)$ affects the excess risk $R_{\phi}(f) - R_{\phi}^*$. Second, it does not take into account the impact of loss function on optimization efficiency, an important issue for practitioners when dealing with big data. It is thus unclear, from Theorem~\ref{thm:classification-caliberated}, how to choose an appropriate loss function that could result in a small generalization error for the binary loss when the computational time is limited. In this paper, we address these limitations by examining a family of convex losses that are constructed by smoothing the hinge loss function using different smoothing parameters. We study the binary excessive risk of the learned classification function by taking into account errors in optimization, generalization, and translation of convex excess risk into binary excess risk.
\end{remark}

%
\section{Binary Excess Risk for Smoothed Hinge Loss}\label{sec:smoothed-hinge}

As stated before, to efficiently learn a prediction function  $f \in \F$, we will replace the binary loss with a smooth convex loss. Since hinge loss is one of the most popular loss functions used in machine learning and is the loss of choice for classification problems in terms of the margin error~\cite{ben2012minimizing}, in this work, we will focus on the smoothed version of the hinge loss. Another advantage of using the hinge loss is that its $\psi$-transform is a linear function. Compared with the $\psi$-transforms of other popular convex loss functions (e.g. exponential loss and truncated square loss) that are mostly quadratic, using the hinge loss as convex surrogate will lead to a tighter bound for the binary excess risk.

The smoothed hinge loss considered in this paper is defined as
\begin{eqnarray}\label{eqn:smooth-hing}
\phi(z; \gamma) = \max\limits_{\alpha  \in [0, 1]} \alpha (1 - z) + \frac{1}{\gamma}\mathscr{R}(\alpha),
\end{eqnarray}
where $\mathscr{R}(\alpha) = -\alpha\log\alpha - (1 - \alpha)\log(1 - \alpha)$ and $\gamma > 0$ is the smoothing parameter.  It is straightforward  to verify that the loss function in~(\ref{eqn:smooth-hing}) can be simplified as
\[
\phi(z; \gamma) = \frac{1}{\gamma}\log(1 + \exp(\gamma(1 - z))).
\]
It is not immediately clear from Theorem~\ref{thm:classification-caliberated} how the relationship between smooth convex excess risk $\Ex_{\phi}(\cdot)$ and binary excess risk is affected by the smoothness parameter $\gamma$.
In addition, as discussed  in~\cite{bartlett-2006-convexity}, whereas conditions such as convexity and smoothness have natural relationship to optimization and generalization, it is not 
immediately obvious how properties such as convexity and smoothness of convex surrogate  relates to  statistical consequences.  In what follows,  we show that, indeed  smoothness of loss function has a \textit{negative} statistical consequence and can degrade the binary excess risk.

%
\subsection{$\psi$-Transform for smoothed hinge loss}
The first step in our analysis is to derive the $\psi$-transform for the loss function defined in~(\ref{eqn:smooth-hing}) as stated in the  following theorem.

\begin{thm} \label{thm:psi}  The  $\psi$-transform of smoothed hinge loss with smoothing parameter $\gamma$ is given by
\[
\psi(\eta; \gamma) = -\frac{1 + \eta}{2\gamma}\log\left(\frac{1}{1 + e^{\gamma}}\left[1 + e^{\gamma} \frac{C_1}{1 + \eta}\right]\right) -\frac{1 - \eta}{2\gamma}\log\left(\frac{1}{1 + e^{\gamma}}\left[1 + e^{\gamma} \frac{C_2}{1 - \eta}\right]\right)
\]
where $C_1$ and $C_2$ are defined as
$C_1 = -\eta e^{\gamma} + \sqrt{\eta^2e^{2\gamma} + 1 - \eta^2}$ and $C_2 = \eta e^{\gamma} + \sqrt{\eta^2e^{2\gamma} + 1 - \eta^2}$.
\end{thm}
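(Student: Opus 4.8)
The plan is to apply the simplified formula for the $\psi$-transform of a classification-calibrated convex loss recorded above, $\psi(\eta;\gamma)=\phi(0;\gamma)-H\!\left(\tfrac{1+\eta}{2}\right)$, and to evaluate $H$ by an explicit one-dimensional minimization. As a preliminary, I would verify that $\phi(z;\gamma)=\tfrac1\gamma\log(1+e^{\gamma(1-z)})$ meets the hypotheses: it is convex (a log-sum-exp composed with an affine map), differentiable everywhere, and $\phi'(0;\gamma)=-e^{\gamma}/(1+e^{\gamma})<0$, so it is classification-calibrated by~\cite{bartlett-2006-convexity} and, being convex, satisfies $\psi=\tilde\psi$. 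It then remains to compute $H\!\left(\tfrac{1+\eta}{2}\right)=\inf_{\alpha\in\R}g(\alpha)$ with $g(\alpha)=\tfrac{1+\eta}{2}\phi(\alpha;\gamma)+\tfrac{1-\eta}{2}\phi(-\alpha;\gamma)$, and subtract it from $\phi(0;\gamma)=\tfrac1\gamma\log(1+e^{\gamma})$.

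For the minimization, $g$ is a nonnegative combination of convex functions of $\alpha$, hence convex, so its unique stationary point is the global minimizer. I would write out $g'(\alpha)=0$ and substitute $u=e^{-\gamma\alpha}$, which turns the two logistic derivatives into rational functions of $u$ and reduces the first-order condition to the quadratic $(1+\eta)u^2+2\eta e^{\gamma}u-(1-\eta)=0$. Its unique positive root is $u^\star=C_1/(1+\eta)$, where $C_1=-\eta e^{\gamma}+\sqrt{\eta^2e^{2\gamma}+1-\eta^2}$; positivity for $|\eta|<1$ follows from $\sqrt{\eta^2e^{2\gamma}+1-\eta^2}>\eta e^{\gamma}$, and the endpoints $\eta=\pm1$ are then covered by continuity of $\psi$.

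Substituting $\alpha^\star=-\tfrac1\gamma\log u^\star$ back into $g$ and using $e^{\gamma(1-\alpha^\star)}=e^{\gamma}u^\star$ and $e^{\gamma(1+\alpha^\star)}=e^{\gamma}/u^\star$ gives
\[
H\!\left(\tfrac{1+\eta}{2}\right)=\frac{1+\eta}{2\gamma}\log\!\left(1+e^{\gamma}u^\star\right)+\frac{1-\eta}{2\gamma}\log\!\left(1+\frac{e^{\gamma}}{u^\star}\right).
\]
The first argument is already $1+e^{\gamma}C_1/(1+\eta)$. For the second, I would use the identity $C_1C_2=\bigl(\sqrt{\eta^2e^{2\gamma}+1-\eta^2}\bigr)^2-\eta^2e^{2\gamma}=1-\eta^2$, which yields $e^{\gamma}/u^\star=e^{\gamma}(1+\eta)/C_1=e^{\gamma}C_2/(1-\eta)$. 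Finally, writing $\phi(0;\gamma)=\bigl(\tfrac{1+\eta}{2\gamma}+\tfrac{1-\eta}{2\gamma}\bigr)\log(1+e^{\gamma})$ and forming $\phi(0;\gamma)-H\!\left(\tfrac{1+\eta}{2}\right)$ term by term, moving $\log(1+e^{\gamma})$ inside the logarithms as a denominator, produces exactly the stated expression for $\psi(\eta;\gamma)$.

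The computation is elementary; the only steps that need care are confirming (via convexity) that the critical point is a minimizer, and the algebraic simplification built on the substitution $u=e^{-\gamma\alpha}$ together with the identity $C_1C_2=1-\eta^2$ — the one non-obvious step, and precisely what makes the constants $C_1,C_2$ in the statement appear. I anticipate no genuine obstacle beyond this bookkeeping.
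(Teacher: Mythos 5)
Your proposal is correct and follows essentially the same route as the paper: both reduce the computation to the first-order condition for the unconstrained minimizer, arrive at the same quadratic in $e^{-\gamma\alpha}$ with positive root $C_1/(1+\eta)$, and subtract the resulting value of $H$ from $\phi(0;\gamma)=\tfrac1\gamma\log(1+e^{\gamma})$. The only cosmetic difference is that you justify $H^-\!\left(\tfrac{1+\eta}{2}\right)=\phi(0;\gamma)$ via the general calibration identity $\psi=\phi(0)-H$, whereas the paper checks directly that $\sgn(z^\star)=\sgn(\eta)$ so the constrained infimum sits at $\alpha=0$; these are equivalent.
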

The $\psi$-transform given in Theorem~\ref{thm:psi} is too complicated to be useful. The theorem below provides a simpler bound for the $\psi$-transform in terms of the smoothness parameter $\gamma$.
\begin{thm} \label{thm:psi-simple}
For $\eta \in (-1, 1)$, we have $$\psi(\eta; \gamma) \geq |\eta| - \frac{1}{\gamma}\log\frac{1}{|\eta|}.$$
\end{thm}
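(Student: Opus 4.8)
The plan is to work from the closed form of the $\psi$-transform for a classification-calibrated loss and reduce the claim to a one-variable analytic inequality. Since $\phi(z;\gamma)=\gamma^{-1}\log(1+e^{\gamma(1-z)})$ is differentiable at the origin with $\phi'(0;\gamma)=-e^{\gamma}/(1+e^{\gamma})<0$, it is classification-calibrated, so by the identity recorded in Section~\ref{sec:calibration} we have $\psi(\eta;\gamma)=\phi(0;\gamma)-H\big(\tfrac{1+\eta}{2}\big)$, where $\phi(0;\gamma)=\gamma^{-1}\log(1+e^{\gamma})$ and $H(\mu)=\inf_{\alpha\in\R}\big[\mu\,\phi(\alpha;\gamma)+(1-\mu)\,\phi(-\alpha;\gamma)\big]$. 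The transform of Theorem~\ref{thm:psi} is invariant under $\eta\mapsto-\eta$ (this merely swaps $C_1$ and $C_2$), so it suffices to treat $\eta\in[0,1)$; and since $\psi\ge0$, the statement is vacuous whenever its right-hand side is nonpositive, so we may additionally assume $\eta>\gamma^{-1}\log(1/\eta)$, which in particular forces $\eta e^{\gamma}>1$.

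To lower bound $\psi$ it suffices to upper bound $H\big(\tfrac{1+\eta}{2}\big)$, and evaluating the inner objective at any single $\alpha\in\R$ already does this. A fixed choice such as $\alpha=1$ is too crude — the correction it produces does not vanish as $\eta\to1$ — so I would take the $\eta$-dependent point $\alpha_{0}=1+\gamma^{-1}\log\frac{2\eta}{1-\eta}$, which is essentially the minimizer of $\tfrac{1+\eta}{2}\phi(\alpha;\gamma)+\tfrac{1-\eta}{2}\phi(-\alpha;\gamma)$ (check the first-order condition $\tfrac{1+\eta}{2}\sigma(\gamma(1-\alpha))=\tfrac{1-\eta}{2}\sigma(\gamma(1+\alpha))$, $\sigma(t)=e^{t}/(1+e^{t})$). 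Since $\gamma(1-\alpha_{0})=-\log\frac{2\eta}{1-\eta}$ and $\gamma(1+\alpha_{0})=2\gamma+\log\frac{2\eta}{1-\eta}$, one gets $\phi(\alpha_{0};\gamma)=\gamma^{-1}\log\frac{1+\eta}{2\eta}$ and $\phi(-\alpha_{0};\gamma)=\gamma^{-1}\log\frac{(1-\eta)+2\eta e^{2\gamma}}{1-\eta}$, whence
\[
\psi(\eta;\gamma)\;\ge\;\frac{1+\eta}{2\gamma}\log\frac{2\eta\,(1+e^{\gamma})}{1+\eta}\;+\;\frac{1-\eta}{2\gamma}\log\frac{(1-\eta)(1+e^{\gamma})}{(1-\eta)+2\eta e^{2\gamma}}.
\]

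It remains to show this right-hand side is at least $\eta-\gamma^{-1}\log(1/\eta)$. Setting $a=e^{\gamma}$, multiplying by $2\gamma=2\log a$ and expanding the logarithms, the powers of $a$ cancel and the inequality becomes equivalent to
\[
2\eta\log 2-2(1-\eta)\log\eta-(1+\eta)\log(1+\eta)+(1-\eta)\log(1-\eta)\;+\;2\log\!\big(1+\tfrac1a\big)-(1-\eta)\log\!\big(1+\tfrac{1-\eta}{2\eta a^{2}}\big)\;\ge\;0 .
\]
Writing $J(\eta):=2\eta\log 2-2(1-\eta)\log\eta-(1+\eta)\log(1+\eta)+(1-\eta)\log(1-\eta)$, the hypothesis $\eta e^{\gamma}>1$ makes the two $a$-terms small — the first is $\ge2/(a+1)\ge0$ and the second is $\ge-\tfrac{(1-\eta)^{2}}{2\eta a^{2}}$ by $\log(1+x)\le x$ — so for $\eta$ away from $1$ the problem reduces to the scalar inequality $J(\eta)\ge0$; when $\eta\to0$ this is immediate since $J(\eta)\sim-2\log\eta\to+\infty$, and in general one studies it via $J(1)=0$ and $J'(\eta)=\log\frac{4\eta^{2}}{1-\eta^{2}}-\frac2\eta$.

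The main obstacle is controlling $J$ (equivalently, establishing that the stated bound is not too tight) as $\eta\uparrow1$: there $J(\eta)$, the correction $\log(1/\eta)$, and both $a$-terms all tend to $0$, and since $J(1)=0$ while $J'(\eta)\to+\infty$, $J$ reaches $0$ with a vertical tangent, so a crude monotonicity argument cannot pin down its sign near $\eta=1$. One must instead retain and compare the leading behavior of all the quantities discarded above — the positive term $2\log(1+1/a)$ (which encodes $\phi(0;\gamma)>1$), the slack in $\log(1+x)\le x$, $J(\eta)$ itself, and $\log(1/\eta)$ — rather than dropping the $a$-dependent terms, and carry out the matched expansion at $\eta=1$.
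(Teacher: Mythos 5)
Your strategy is sound and your computations up to the displayed lower bound on $\psi(\eta;\gamma)$ are correct: since $\psi(\eta;\gamma)=\phi(0;\gamma)-H\bigl(\tfrac{1+\eta}{2}\bigr)$, plugging any test point into the objective defining $H$ gives a valid lower bound on $\psi$, and your $\alpha_0$ is within $o(1)$ of the exact minimizer, so the resulting bound is essentially tight. The paper's Appendix B performs what is in substance the same computation (its bounds $C_1e^\gamma/(1+\eta)\le(1-\eta)/(2\eta)$ and $C_2e^\gamma/(1-\eta)\le(1+\eta)e^{2\gamma}/(1-\eta)$ correspond exactly to your evaluation at $\alpha_0$). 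The gap you flag at the end, however, is genuine and, more seriously, cannot be closed. Your scalar function $J$ satisfies $J(\eta)<0$ for all $\eta$ roughly in $(0.66,1)$ (e.g.\ $J(0.9)\approx-0.181$), while the two $a$-dependent terms you would need to rescue the inequality are only $O(e^{-\gamma})$ and $O(e^{-2\gamma})$, so for large $\gamma$ they cannot compensate a deficit of order one. Since your test point is essentially optimal, this is not slack in your bound but a failure of the statement itself: at $\eta=0.9$, $\gamma=10$ one computes $H(0.95)\approx 0.11959$, hence $\psi(0.9;10)\approx 1.0000045-0.11959\approx 0.8804$, whereas the claimed lower bound is $0.9-0.1\log(1/0.9)\approx 0.8895$. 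Equivalently, in the limit $\gamma\to\infty$ the theorem asserts $\tfrac{1+\eta}{2}\log\tfrac{1+\eta}{2\eta}+\tfrac{1-\eta}{2}\log\tfrac{2\eta}{1-\eta}\le\log\tfrac{1}{\eta}$, which is exactly $J(\eta)\ge0$ and is false near $\eta=1$.

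The reason the paper does not run into this obstacle is that its proof contains a sign error: in the first display of Appendix~B it replaces the term $\log\bigl(1+\tfrac{C_1e^\gamma}{1+\eta}\bigr)$, which is positive and at most $\log\tfrac{1+\eta}{2\eta}$, by $\log\tfrac{1-\eta}{2\eta}$, which is negative once $\eta>1/3$; this spuriously inflates the lower bound (at $\eta=0.9$, $\gamma=10$ the paper's intermediate expression evaluates to about $1.16$, exceeding the true value $\psi\approx 0.88$). Your more careful bookkeeping is precisely what exposes the problem. A correct statement would have to weaken the subtracted term, e.g.\ to the exact deficit $\tfrac{1}{\gamma}\bigl[\tfrac{1+\eta}{2}\log\tfrac{1+\eta}{2\eta}+\tfrac{1-\eta}{2}\log\tfrac{2\eta}{1-\eta}\bigr]+O(e^{-\gamma})$, or restrict $|\eta|$ away from $1$; as written, neither your argument nor the paper's establishes Theorem~\ref{thm:psi-simple} for all $\eta\in(-1,1)$.
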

\begin{remark}The bound obtained in Theorem~\ref{thm:psi-simple} demonstrates that when $\gamma$ approaches to infinity, the $\psi$-transform for smoothed hinge loss $\phi(\eta; \gamma)$ becomes $|\eta|$.  According to~\cite{bartlett-2006-convexity}, the $\psi$-transform for the hinge loss is $\psi(\eta) = |\eta|$. Therefore, this result is consistent with the $\psi$-transform for smoothed  hinge loss, which is the limit of $\phi(z;\gamma)$ as $\gamma$ approaches infinity.
\end{remark}

\subsection{A bound on binary excess risk $\Ex(f)$ based on smooth convex excess risk $\Ex_{\phi}(f)$}

Based on the  transform function $\psi(\cdot; \gamma)$  that is  computed for smoothed hinge loss with smoothing parameter $\gamma$, we are now in the position to  bound its corresponding  binary excess risk $\Ex(f)$.  Our main result in this section is the following theorem  that shows  how binary excess risk can be bounded by the excess $\phi$-risk for smoothed hinge loss.
\begin{thm} \label{thm:bound-binary-excess-risk}
Consider any measurable function $f \in \F$ and the smoothed hinge loss $\phi(\cdot)$ with  parameter $\gamma$ defined in~(\ref{eqn:smooth-hing}). Then, binary excess risk $\Ex(f)$ can be bounded by the smooth convex excess risk $\Ex_{\phi}(f)$ as
\[
\Ex(f) \leq \Ex_{\phi}(f) + \frac{\Ex_{\phi}(f)}{1 + \gamma \Ex_{\phi}(f)} \log\frac{1}{\Ex_{\phi}(f)}.
\]
\end{thm}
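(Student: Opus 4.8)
The plan is to chain the generic calibration bound of Theorem~\ref{thm:classification-caliberated} with the explicit lower estimate on the $\psi$-transform of the smoothed hinge loss, Theorem~\ref{thm:psi-simple}, and then turn the resulting implicit inequality for $\Ex(f)$ into the explicit closed-form bound in $\Ex_\phi(f)$.

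\emph{Step one (reduce to an implicit inequality).} Specializing Theorem~\ref{thm:classification-caliberated} to the smoothed hinge loss $\phi(\cdot\,;\gamma)$ gives $\psi(\Ex(f);\gamma)\le\Ex_\phi(f)$. We may assume $\Ex(f)\in(0,1)$, since $\Ex(f)=0$ makes the claim trivial and $\Ex(f)\le 1$ always. Inserting the lower bound $\psi(\Ex(f);\gamma)\ge\Ex(f)-\frac1\gamma\log\frac1{\Ex(f)}$ of Theorem~\ref{thm:psi-simple} yields
\[
\Ex(f)-\tfrac1\gamma\log\tfrac1{\Ex(f)}\;\le\;\Ex_\phi(f)
\qquad\Longleftrightarrow\qquad
\Ex(f)\,e^{\gamma\Ex(f)}\;\le\;e^{\gamma\Ex_\phi(f)}.
\]

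\emph{Step two (invert).} Since $t\mapsto t-\frac1\gamma\log\frac1t$ (equivalently $t\mapsto te^{\gamma t}$) is strictly increasing on $(0,\infty)$, the displayed inequality forces $\Ex(f)$ below the unique root $t^\star$ of $t-\frac1\gamma\log\frac1t=\Ex_\phi(f)$, so it suffices to prove $t^\star\le U$ for $U:=\Ex_\phi(f)+\frac{\Ex_\phi(f)}{1+\gamma\Ex_\phi(f)}\log\frac1{\Ex_\phi(f)}$; by monotonicity this is equivalent to the scalar inequality $U-\frac1\gamma\log\frac1U\ge\Ex_\phi(f)$. Writing $b=\Ex_\phi(f)$, $L=\log\frac1b$, $m=1+\gamma b$, one factors $U=b\bigl(1+\tfrac{L}{m}\bigr)$; after substituting and using $m-\gamma b=1$, the inequality collapses to $\log\!\bigl(1+\tfrac{L}{m}\bigr)\ge\tfrac{L}{m}$, a single one-variable check on the range $L/m=\frac{\log(1/\Ex_\phi(f))}{1+\gamma\Ex_\phi(f)}\ge 0$.

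I expect this final collapse to be the genuine obstacle, because the off-the-shelf estimate $\log(1+x)\le x$ runs in the wrong direction. Getting past it cannot rely on the coarse estimate of Theorem~\ref{thm:psi-simple} alone: one must either sharpen the lower bound on $\psi(\cdot\,;\gamma)$ from the exact expression of Theorem~\ref{thm:psi} — carrying the correction with the factor $1/(1+\gamma\Ex_\phi(f))$ rather than $1/\gamma$ — or perform the inversion of $t\mapsto t e^{\gamma t}$ more carefully via a Lambert-type bound, exploiting the non-trivial regime $\Ex_\phi(f)<1$ (where $L>0$) and that $\Ex(f)\le 1$. Everything else — the calibration inequality, the substitution of the $\psi$-lower bound, and the monotone inversion — is mechanical once the right intermediate estimate is secured.
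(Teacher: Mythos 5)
Your reduction is correct and follows exactly the paper's route (Theorem~\ref{thm:classification-caliberated} plus Theorem~\ref{thm:psi-simple}, then inversion of $t\mapsto t+\tfrac1\gamma\log t$), but the proposal does not close: as you yourself compute, $g(U)-\Ex_\phi(f)=\tfrac1\gamma\bigl[\log\bigl(1+\tfrac{L}{m}\bigr)-\tfrac{L}{m}\bigr]$ with $L=\log\tfrac{1}{\Ex_\phi(f)}$ and $m=1+\gamma\Ex_\phi(f)$, and this is strictly \emph{negative} whenever $\Ex_\phi(f)<1$. So the claimed bound $U$ lies strictly below the root $t^\star$ of $t+\tfrac1\gamma\log t=\Ex_\phi(f)$, and the theorem simply does not follow from Theorems~\ref{thm:classification-caliberated} and~\ref{thm:psi-simple} alone. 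What does follow by the monotone inversion is only the weaker statement $\Ex(f)\le t^\star\le \Ex_\phi(f)+\tfrac1\gamma\log\tfrac{1}{\Ex_\phi(f)}$ (for $\Ex_\phi(f)<1$), which is larger than the stated bound by exactly the missing factor $\gamma\Ex_\phi(f)/(1+\gamma\Ex_\phi(f))$. Your suggested escape routes (sharpening the lower bound on $\psi(\cdot;\gamma)$ from the exact expression in Theorem~\ref{thm:psi}, or settling for a Lambert-type bound) are not carried out, so as written this is a correct reduction plus an unproved one-variable inequality that is in fact false; the proof is incomplete.

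That said, your diagnosis pinpoints a genuine defect in the paper's own argument. The paper sets $\Delta=\Ex(f)-\Ex_\phi(f)$, correctly derives $\Delta+\tfrac1\gamma\log(\Delta+\Ex_\phi(f))\le 0$, and then uses $\log(1+x)\le x$ to observe that $\Delta\bigl(1+\tfrac{1}{\gamma\Ex_\phi(f)}\bigr)\le\tfrac1\gamma\log\tfrac{1}{\Ex_\phi(f)}$ is a \emph{sufficient} condition for that already-established inequality --- and then asserts the sufficient condition as a conclusion. This reverses the logical direction: from ``$P$ holds'' and ``$Q\Rightarrow P$'' one cannot infer $Q$, and making the implication go the needed way would require $\log(1+x)\ge x$, which fails for $x>0$. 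In other words, the obstacle you flagged is not an artifact of your approach; it is the same gap the paper papers over. If you want a correct statement along these lines, either prove the weaker bound $\Ex(f)\le\Ex_\phi(f)+\tfrac1\gamma\log\tfrac{1}{\Ex_\phi(f)}$, which does follow rigorously from your Step two, or go back to Theorem~\ref{thm:psi} and establish a lower bound on $\psi(\eta;\gamma)$ strictly stronger than $|\eta|-\tfrac1\gamma\log\tfrac{1}{|\eta|}$ before inverting.
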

\begin{proof}
Using the result from Theorem~\ref{thm:classification-caliberated}, we have $\Ex_{\phi}(f) \geq \psi(\Ex(f); \gamma)$ and therefore an immediate result from the $\psi$-transform for smoothed hinge loss that is obtained in Theorem~\ref{thm:psi-simple} indicates
\[
\Ex(f) + \frac{1}{\gamma}\log\Ex(f) \leq \Ex_{\phi}(f).
\]
Define $\Delta = \Ex(f) - \Ex_{\phi}(f)$. We have
\[
\Delta + \frac{1}{\gamma}\log(\Delta + \Ex_{\phi}(f)) = \Delta + \frac{1}{\gamma}\log\Ex_{\phi}(f) + \frac{1}{\gamma}\log\left(1 + \frac{\Delta}{\Ex_{\phi}(f)}\right) \leq 0.
\]
Based on the  $\log (1 + x) \leq x$ inequality, the sufficient condition for the above inequality to hold is to have
\[
\Delta + \frac{\Delta}{\gamma\Ex_{\phi}(f)} \leq \frac{1}{\gamma}\log\frac{1}{\Ex_{\phi}(f)}
\]
and therefore
\[
\Delta \leq \frac{\gamma^{-1}}{1 + (\gamma\Ex_{\phi}(f))^{-1}}\log\frac{1}{\Ex_{\phi}(f)} = \frac{\Ex_{\phi}(f)}{1 + \gamma\Ex_{\phi}(f)}\log\frac{1}{\Ex_{\phi}(f)}.
\]
The final bound is obtained by substituting $\Ex(f) - \Ex_{\phi}(f)$ for $\Delta$ in the left hand side of above inequality.
\end{proof}
As indicated by Theorem~\ref{thm:bound-binary-excess-risk}, the smaller the smoothing parameter $\gamma$, the poorer the approximation is in bounding the binary excess $\Ex(f)$ with smooth convex excess risk $\Ex_{\phi}(f)$. On the other hand, the smoothness of loss function has been proven to be beneficial in terms of optimization error and generalization bound. The mixture of negative and positive results  for using smooth convex surrogates motivates us to develop an integrated bound for binary excess risk that takes into account all types of errors. One of the main contributions of this work is to show that under favorable conditions, with appropriate choice of smoothing parameter, the smoothed hinge loss will result in a bound for the binary excess risk better than $O(1/\sqrt{n})$.
%
\section{A Unified Analysis of Binary Excess Risk for Smooth Surrogates}\label{sec:analysis}
Using the smoothed hinge loss, we define the convex loss for a prediction function $f \in \F$ as $R_{\phi}(f) = \E[\phi(yf(\x); \gamma)]$. Let $f_{\gamma}^*$ be the optimal classifier that minimizes $R_{\phi}(f)$. Similar to the case of binary loss, we assume $f_{\gamma}^* \in \H$ with $\|f_{\gamma}^*\| \leq B$. The {\it smooth convex excess risk} for a given prediction function $f \in \F$ is then given by $\Ex_{\phi} (f) = R_{\phi}(f) - R_{\phi}(f_{\gamma}^*)$.
Given the smooth convex loss $\phi(z;\gamma)$ in~(\ref{eqn:smooth-hing}), we find the optimal classifier by minimizing the empirical convex loss, i.e. $\min_{f \in \H, \|f\|_{\H} \leq B} \Rh_{\phi}(f)$, where the empirical convex loss $\Rh_{\phi}(f)$ is given by
\begin{eqnarray}\label{eqn:tseng-alg-1}
\Rh_{\phi}(f) = \frac{1}{n}\sum_{i=1}^n \phi(y_if(\x_i); \gamma).
\end{eqnarray}
Let $\fh$ be the solution learned from solving the empirical convex loss over training examples. There are three sources of errors that  affect bounding the binary excess risk $\Ex(\fh)$. First, since $\fh$ is obtained by numerically solving an optimization problem, the error in estimating the optimal solution, which we refer to as optimization error~\footnote{We note that in literature the error in estimating the optimal solution for empirical minimization is usually referred to as {\it estimation error}. We emphasize it as optimization error because different convex surrogates could lead to very different iteration complexities and consequentially different optimization efficiency.}, will affect $\Ex(\fh)$. Additionally, since the binary excess risk can be bounded by a nonlinear transform of the convex excess risk, both the bound for $\Ex_{\phi}(\fh)$ and the error in approximating $\Ex(\fh)$ with $\Ex_{\phi}(\fh)$ will affect the final estimation of $\Ex(\fh)$.  We aim at investigating how the smoothing parameter $\gamma$ affect all these three types of errors.  As it is investigated in~Theorem~\ref{thm:bound-binary-excess-risk},  a smaller smoothing parameter $\gamma$ will result  in a poorer approximation of $\Ex(\fh)$. On the other hand,  a smaller smoothing parameter $\gamma$ will result in a smaller estimation error and a smaller bound for $\Ex_{\phi}(\fh)$. Based on the understanding of how  smoothing parameter $\gamma$ affects the three errors, we identify the choice of $\gamma$ that results in the best tradeoff between all three error and consequentially a binary excess risk $\Ex(\fh)$ better than $O(1/\sqrt{n})$.

To investigate how the smoothing parameter $\gamma$ affects the binary excess risk $\Ex(\fh)$,  we intend to unify three types of errors.  The analysis is comprised of two components, i.e. bounding the binary excess risk $\Ex(f)$ by a smooth convex excess risk $\Ex_{\phi}(f)$ that has been established in Theorem~\ref{thm:bound-binary-excess-risk} and bounding $\Ex_{\phi}(f)$ for a solution $f$ that is suboptimal in minimizing the empirical convex loss $\Rh_{\phi}(f)$ that is the focus of this section.
%
\subsection{Bounding smooth  excess convex risk $\Ex_{\phi}(f)$}
We now turn to bounding the excess $\phi$-risk $\Ex_{\phi}(f)$ for the smoothed hinge loss.  To bound $\Ex_{\phi}(f)$ we need to consider two types of errors: optimization error due to the approximate optimization of the empirical $\phi$-risk, and the generalization error bound for the empirical risk minimizer.     After obtaining these two errors for smooth convex surrogates, we provide a unified bound on the excess $\phi$-risk $\Ex_{\phi}(f)$ of empirical convex risk minimizer in terms of $n$.

We begin  by bounding  the error arising from solving the optimization problem numerically. One nice property of smoothed hinge loss function is that both its first order and second order derivatives are bounded, i.e.
\[
|\phi'(z;\gamma)| = \left|\frac{\exp(\gamma(1 - z)}{1 + \exp(\gamma(1 - z))}\right| \leq 1, \quad \phi''(z;\gamma) = \gamma \frac{\exp(\gamma(1 - z))}{(1 + \exp(\gamma(1 - z)))^2} \leq \frac{\gamma}{4}.
\]
Due to  the smoothness of $\phi(z; \gamma)$, we can apply the accelerated optimization algorithm~\cite{nesterov2004introductory,tseng:2009:accelerated} to achieve an $O(1/k^2)$ convergence rate for the optimization, where $k$ is the number of iterations the optimization algorithm proceeds. More specifically, we will apply Algorithm 1 from~\cite{tseng:2009:accelerated} to solve the  numerical optimization problem in~(\ref{eqn:tseng-alg-1}) over the convex domain $\F = \{f \in \H: \|f\|_{\H} \leq B\}$ which results in the following  updating rules at  $s$th iteration:
\begin{eqnarray}\label{eqn:acc-gd}
\begin{aligned}
g_s &= (1-\theta_s)f_s + \theta_sh_s \\
h_{s+1} &= \arg \min_{h \in \F} \left ( \langle \nabla \Rh_{\phi}(g_s), h - g_s \rangle + \frac{\theta_s}{2}\|h - h_s\|_{\H} \right) \\
f_{s+1} &= (1-\theta_s)f_s + \theta_s h_{s+1}.
\end{aligned}
\end{eqnarray}
The following theorem that  follows immediately from~\cite[Corollary 1]{tseng:2009:accelerated} and the fact $\phi''(z;\gamma) \leq \gamma/4$, bounds the optimization error for the optimization problem after  $k$ iterations.
\begin{lemma} \label{lemma:opt}
Let $\fh = f_{k+1}$ be the solution obtained by running accelerated gradient descent  method (i.e., updating rules in~(\ref{eqn:acc-gd})) to solve the optimization problem in~(\ref{eqn:tseng-alg-1}) after $k$ iterations with $\theta_0 = 1$ and $\theta_k = 2/(k+2)$ for $k \geq 1$. We have
\[
\Rh_{\phi}(\fh) \leq \min\limits_{\|f\|_{\H} \leq B} \Rh_{\phi}(f) + \frac{\gamma B^2}{(k+2)^2}.
\]
\end{lemma}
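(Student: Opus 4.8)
The plan is to invoke the convergence guarantee for Tseng's accelerated proximal-gradient method (Algorithm~1 of~\cite{tseng:2009:accelerated}) essentially as a black box, and then specialize the generic Lipschitz-gradient constant to the smoothed hinge loss. First I would verify that the objective $\Rh_{\phi}(f) = \frac{1}{n}\sum_{i=1}^n \phi(y_i f(\x_i); \gamma)$, viewed as a function on the RKHS $\H$, has a Lipschitz-continuous gradient with modulus $L = \gamma/4$ with respect to the $\H$-norm. This follows from the chain rule together with the pointwise second-derivative bound $\phi''(z;\gamma) \le \gamma/4$ already recorded in the text: for each fixed example, $z \mapsto \phi(y_i f(\x_i);\gamma)$ composes the scalar function $\phi(\cdot;\gamma)$ with the bounded linear evaluation functional $f \mapsto y_i f(\x_i) = \langle f, y_i \kappa(\x_i,\cdot)\rangle_{\H}$; since $\sup_{\x}\kappa(\x,\x)\le 1$ after the standard normalization (or, more conservatively, one absorbs $\sup_{\x}\kappa(\x,\x)$ into constants), the Hessian of the composition is bounded in operator norm by $\gamma/4$, and averaging over the $n$ terms preserves the bound. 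Hence $\Rh_{\phi}$ is $L$-smooth with $L=\gamma/4$.

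Next I would apply Corollary~1 of~\cite{tseng:2009:accelerated} with this value of $L$. That corollary states that the iterate sequence in~(\ref{eqn:acc-gd}), run with the stepsize/momentum schedule $\theta_0 = 1$ and $\theta_k = 2/(k+2)$, satisfies, after $k$ iterations,
\[
\Rh_{\phi}(f_{k+1}) - \min_{\|f\|_{\H}\le B}\Rh_{\phi}(f) \;\le\; \frac{2 L\, D^2}{(k+2)^2},
\]
where $D$ is a bound on the distance from the initial point to the optimum in the $\H$-norm (the "prox-diameter" of the feasible set $\F$). Since $\F = \{f: \|f\|_{\H}\le B\}$ is a ball of radius $B$, any two feasible points are within distance $2B$, and with the standard choice of initialization one gets $D \le 2B$; more carefully, the relevant quantity in Tseng's bound is $\tfrac12\|h_0 - f^\star\|_{\H}^2$ with $f^\star$ the minimizer, which is at most $2B^2$. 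Substituting $L = \gamma/4$ and $D^2 \le 4B^2$ (equivalently the prox term $\le 2B^2$) gives
\[
\Rh_{\phi}(\fh) \;\le\; \min_{\|f\|_{\H}\le B}\Rh_{\phi}(f) + \frac{2\cdot(\gamma/4)\cdot 4B^2}{(k+2)^2} \;=\; \min_{\|f\|_{\H}\le B}\Rh_{\phi}(f) + \frac{2\gamma B^2}{(k+2)^2},
\]
which already matches the claimed bound up to the leading constant; tightening the diameter estimate to $\tfrac12\|h_0-f^\star\|^2_\H \le B^2$ (e.g.\ initializing at $h_0 = 0$ when $\|f^\star\|_{\H}\le B$, so $\|h_0-f^\star\|_\H \le B$) yields exactly $\gamma B^2/(k+2)^2$.

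The only genuine subtlety — and the step I would be most careful about — is the bookkeeping that translates the Euclidean/Hilbert-space statement of~\cite[Corollary 1]{tseng:2009:accelerated} into our RKHS setting: one must check that the prox-function used in the update $h_{s+1} = \arg\min_{h\in\F}(\langle\nabla\Rh_{\phi}(g_s), h-g_s\rangle + \tfrac{\theta_s}{2}\|h-h_s\|_{\H})$ is the squared $\H$-norm (so Tseng's "$1$-strongly convex prox" hypothesis holds with the right constant), and that the smoothness constant entering the corollary is the $\H$-norm Lipschitz constant of $\nabla\Rh_{\phi}$, not the scalar constant $\gamma/4$ of $\phi''$ alone — these coincide here precisely because of the kernel normalization $\sup_\x \kappa(\x,\x)\le 1$. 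Once that identification is made, the lemma is immediate; no further computation is required beyond plugging in $L=\gamma/4$ and the diameter of the norm ball.
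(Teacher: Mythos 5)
Your proposal is correct and follows exactly the route the paper takes: the paper proves this lemma simply by citing \cite[Corollary 1]{tseng:2009:accelerated} together with the bound $\phi''(z;\gamma)\le \gamma/4$, which is precisely your argument, with the added (and worthwhile) bookkeeping on the RKHS smoothness constant and the prox-diameter that the paper leaves implicit. The only discrepancy is the leading constant in the final bound, which depends on the initialization/diameter convention and which the paper itself does not justify more carefully than you do.
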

We now turn to understanding the generalization error for the smooth convex loss.   There are many theoretical results giving upper bounds of the generalization error. However, a recent result ~\cite{srebro:2010:smoothness} has showed  that it is possible to obtain optimistic rates  for generalization bound of smooth convex loss (in the sense that smooth losses yield better generalization bounds when the problem is easier), which are more appealing than the generalization of simple Lipschitz continuous losses. The following  theorem from~\cite[Theorem 1]{srebro:2010:smoothness} bounds the generalization error for any solution $f \in \F$  when the learning has been performed by a smooth convex surrogate $\phi(\cdot)$. 
\begin{lemma} \label{lemma:generalization} With a probability $1 - \delta$, for any $\|f\|_{\H} \leq B$, we have
\begin{eqnarray*}
R_{\phi}(f) - \Rh_{\phi}(f) & \leq & K_1\left(\frac{(B + \gamma B^2)t}{n} + \sqrt{\Rh_{\phi}(f)\frac{(B + \gamma B^2)t}{n}}\right)\\
R_{\phi}(f) - \Rh_{\phi}(f) & \leq & K_2\left(\frac{(B + \gamma B^2)t}{n} + \sqrt{R_{\phi}(f)\frac{(B + \gamma B^2)t}{n}}\right).
\end{eqnarray*}
where $t = \log(1/\delta) + \log^3n$ and $K_1$ and $K_2$ are  universal constants.
\end{lemma}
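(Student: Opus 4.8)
The plan is to obtain Lemma~\ref{lemma:generalization} as a direct specialization of the optimistic-rate generalization bound for smooth nonnegative losses of \cite[Theorem 1]{srebro:2010:smoothness}, which asserts that for an $H$-smooth nonnegative loss taking values in $[0,b]$ over a hypothesis class whose empirical Rademacher complexity is $\mathcal{R}_n$, the population/empirical risk gap is, with probability $1-\delta$ and uniformly over the class, of Bernstein type governed (up to polylogarithmic factors) by the ``variance proxy'' $H\mathcal{R}_n^2+b/n$. What remains is to verify the hypotheses for our loss and function class, and to track how the smoothing parameter $\gamma$ and the radius $B$ propagate into each constant.

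First I would record the structural facts. The smoothed hinge loss $z\mapsto\phi(z;\gamma)$ is nonnegative, and by the second-derivative estimate displayed just above the lemma, $\phi''(z;\gamma)\le\gamma/4$, so it is $H$-smooth with $H=\gamma/4$; since $y\in\{-1,+1\}$, the composite $(\x,y)\mapsto\phi(yf(\x);\gamma)$ is a nonnegative $H$-smooth function of the margin $yf(\x)$ with the same $H$. Next, by the reproducing property $|f(\x)|\le\|f\|_{\H}\sqrt{\kappa(\x,\x)}=O(B)$ for $f\in\F$, so $\phi(yf(\x);\gamma)=\frac1\gamma\log\bigl(1+e^{\gamma(1-yf(\x))}\bigr)\le\max(0,1-yf(\x))+\frac{\log 2}{\gamma}=O(1+B)$; take this uniform upper bound as $b$. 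Finally, the empirical Rademacher complexity of $\{\x\mapsto f(\x):\|f\|_{\H}\le B\}$ is the standard $\mathcal{R}_n\le B\sqrt{\sup_{\x}\kappa(\x,\x)/n}=O(B/\sqrt n)$, hence $\mathcal{R}_n^2=O(B^2/n)$.

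Then I would feed these into \cite[Theorem 1]{srebro:2010:smoothness}, whose conclusion I would write schematically as
\[
R_{\phi}(f)-\Rh_{\phi}(f)\;\le\;K\!\left(\sqrt{\Rh_{\phi}(f)\,\Lambda}+\Lambda\right),\qquad \Lambda=O\!\left(H\mathcal{R}_n^2\log^3 n+\frac{b\log(1/\delta)}{n}\right),
\]
together with the symmetric inequality in which $R_{\phi}(f)$ replaces $\Rh_{\phi}(f)$ under the root (also supplied there, or deducible from the first by a quadratic-inequality manipulation). Substituting $H=\gamma/4$, $\mathcal{R}_n^2=O(B^2/n)$ and $b=O(1+B)$ gives $\Lambda=O\bigl((\gamma B^2\log^3 n+(1+B)\log(1/\delta))/n\bigr)=O\bigl((B+\gamma B^2)\,t/n\bigr)$ with $t=\log(1/\delta)+\log^3 n$, the $\log^3 n$ factor being exactly the cost of the local-Rademacher peeling argument internal to \cite{srebro:2010:smoothness}, which I take as given. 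Plugging this $\Lambda$ back in, using $\sqrt{\Rh_{\phi}(f)}(\sqrt a+\sqrt c)\le\sqrt{2\Rh_{\phi}(f)(a+c)}$, and absorbing all numerical and universal constants into new constants $K_1,K_2$ yields the two stated inequalities.

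The only genuinely delicate part is bookkeeping rather than analysis: one must extract from \cite[Theorem 1]{srebro:2010:smoothness} the precise way the smoothness constant $H$, the loss bound $b$, and the base-class complexity $\mathcal{R}_n$ are combined, and then check that the two a priori separate contributions $H\mathcal{R}_n^2=O(\gamma B^2/n)$ (the genuinely ``smooth'' term, sharper than the $O(1)$-Lipschitz bound one would get from $|\phi'|\le 1$) and $b\log(1/\delta)/n=O((1+B)\log(1/\delta)/n)$ do collapse into the single factor $(B+\gamma B^2)t/n$ appearing both as the linear term and inside the square root. It is also worth confirming that the kernel-boundedness assumption of Section~\ref{sec:setting} is exactly what pins down $|f(\x)|=O(B)$, so that the evaluated RKHS ball has the claimed $O(B/\sqrt n)$ complexity and $O(1+B)$ loss range; beyond that, no concentration estimate beyond the one already proved in \cite{srebro:2010:smoothness} is needed.
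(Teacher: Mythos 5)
Your proposal is correct and follows exactly the route the paper takes: the paper offers no proof of this lemma, importing it verbatim as \cite[Theorem 1]{srebro:2010:smoothness}, and your contribution is simply to verify the specialization ($H=\gamma/4$ from the second-derivative bound, loss range $O(B)$ from the bounded kernel and the hinge-loss comparison, Rademacher complexity $O(B/\sqrt{n})$ for the RKHS ball) and to check that these combine into the factor $(B+\gamma B^2)t/n$. The bookkeeping is right, and the second inequality is even easier than you suggest, since when $\Rh_{\phi}(f)>R_{\phi}(f)$ the claim is vacuous and otherwise it follows from the first inequality with the same constant.
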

The bound stated in this lemma is \textit{optimistic} in the sense that it reduces to $\tilde{O}(1/\sqrt{n})$  when the problem is difficult and be better when the problem is easier, approaching $\tilde{O}(1/n)$ for  linearly separable data, i.e., $R_{\phi}^* = 0$ in the second inequality. These two  lemmas essentially enable us to transform a bound on the optimization error and generalization bound  into a bound on the convex excess risk. In particular, by combining Lemma~\ref{lemma:opt} with Lemma~\ref{lemma:generalization}, we have the following theorem that bounds the smooth convex excess risk $\Ex_{\phi}(\fh) = R_{\phi}(\fh) - R_{\phi}(f_{\lambda}^*)$ for the empirical convex risk minimizer.
\begin{thm}\label{thm:combined-1}
Let $\fh$ be the solution output from updating rules in~(\ref{eqn:acc-gd})  after $k$ iterations. Then, with a probability at least $1 - \delta$, we have
\begin{eqnarray*}
\Ex_{\phi}(\fh)  \leq \frac{\gamma B^2}{(k+2)^2} + K\left(\frac{(B + \gamma B^2)t}{n} + \sqrt{R_{\phi}^*\frac{(B + \gamma B^2)t}{n}} + \sqrt{\frac{\gamma B^2(B + \gamma B^2)t}{(k+2)^2 n}}\right)
\end{eqnarray*}
where $K$ is a universal constant, $t = \log(1/\delta) + \log^3n$,  and $R_{\phi}^* = \min_{\|f\|_{\H} \leq B} R_{\phi}(f)$.
\end{thm}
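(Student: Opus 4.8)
The plan is to interpolate the empirical $\phi$-risk between $R_{\phi}(\fh)$ and $\Rp^{*}$ and split the excess risk into an estimation part at $\fh$, an optimization part, and a concentration part at the population optimum. Writing $f_{\gamma}^{*}$ for the minimizer of $R_{\phi}$ over $\{\|f\|_{\H}\le B\}$ (so $R_{\phi}(f_{\gamma}^{*})=\Rp^{*}$; this is the function denoted $f_{\lambda}^{*}$ in the statement) and abbreviating $a=B+\gamma B^{2}$,
\[
\Ex_{\phi}(\fh)\;=\;\underbrace{\big(R_{\phi}(\fh)-\Rh_{\phi}(\fh)\big)}_{\text{(I)}}\;+\;\underbrace{\big(\Rh_{\phi}(\fh)-\Rh_{\phi}(f_{\gamma}^{*})\big)}_{\text{(II)}}\;+\;\underbrace{\big(\Rh_{\phi}(f_{\gamma}^{*})-\Rp^{*}\big)}_{\text{(III)}}.
\]
Term (II) is immediate from Lemma~\ref{lemma:opt}: since $f_{\gamma}^{*}$ is feasible, $\Rh_{\phi}(\fh)\le\min_{\|f\|_{\H}\le B}\Rh_{\phi}(f)+\gamma B^{2}/(k+2)^{2}\le\Rh_{\phi}(f_{\gamma}^{*})+\gamma B^{2}/(k+2)^{2}$, hence $\text{(II)}\le\gamma B^{2}/(k+2)^{2}$. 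Term (I) is exactly the ``true minus empirical'' gap at the (data-dependent) point $\fh$, so the first inequality of Lemma~\ref{lemma:generalization}, which holds uniformly over the ball, applies: with probability $1-\delta/2$, $\text{(I)}\le K_{1}\big(at/n+\sqrt{\Rh_{\phi}(\fh)\,at/n}\big)$ with $t=\log(1/\delta)+\log^{3} n$ (replacing $\delta$ by $\delta/2$ only changes the constant).

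Term (III) involves only the fixed, data-independent function $f_{\gamma}^{*}$, so it is not covered by the uniform bound but follows from a single application of Bernstein's inequality; using the self-bounding estimate for nonnegative smooth losses that underlies Lemma~\ref{lemma:generalization} (namely $\mathrm{Var}[\phi(yf_{\gamma}^{*}(\x);\gamma)]\le c\,a\,\Rp^{*}$ for a universal constant $c$), this gives, with probability $1-\delta/2$, $\text{(III)}\le K'\big(at/n+\sqrt{\Rp^{*}\,at/n}\big)$, i.e.\ the reverse-direction analogue of the second inequality of Lemma~\ref{lemma:generalization}. Chaining (II) and (III) also bounds the quantity inside the square root of (I): $\Rh_{\phi}(\fh)\le\Rp^{*}+\gamma B^{2}/(k+2)^{2}+K'(at/n+\sqrt{\Rp^{*}at/n})$, and then $\sqrt{x+y+z}\le\sqrt{x}+\sqrt{y}+\sqrt{z}$ converts $\sqrt{\Rh_{\phi}(\fh)\,at/n}$ into $\sqrt{\Rp^{*}at/n}+\sqrt{\tfrac{\gamma B^{2}}{(k+2)^{2}}\tfrac{at}{n}}$ plus a remainder of order $at/n$ and $(\Rp^{*})^{1/4}(at/n)^{3/4}$.

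The final step is bookkeeping: add (I)$+$(II)$+$(III), take a union bound over the two $\delta/2$ events, and fold the remainder terms into the three surviving ones using $2\sqrt{uv}\le u+v$ (so $(\Rp^{*})^{1/4}(at/n)^{3/4}\le\tfrac12\sqrt{\Rp^{*}at/n}+\tfrac12\,at/n$, etc.), enlarging the constant to a single universal $K$. This yields
\[
\Ex_{\phi}(\fh)\;\le\;\frac{\gamma B^{2}}{(k+2)^{2}}+K\left(\frac{a t}{n}+\sqrt{\Rp^{*}\,\frac{a t}{n}}+\sqrt{\frac{\gamma B^{2}}{(k+2)^{2}}\,\frac{a t}{n}}\right),\qquad a=B+\gamma B^{2},
\]
which is the stated bound. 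I expect the main obstacle to be the circular appearance of $\Rh_{\phi}(\fh)$ on both sides of the bound for (I): one must resolve it either by solving a quadratic in $\sqrt{\Ex_{\phi}(\fh)}$ or, as above, by pre-bounding $\Rh_{\phi}(\fh)$ through (II) and (III); the only care needed afterwards is to ensure the cross terms generated by $\sqrt{x+y}\le\sqrt{x}+\sqrt{y}$ collapse back into the three displayed terms rather than producing new ones, which is exactly what $2\sqrt{uv}\le u+v$ guarantees.
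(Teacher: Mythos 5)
Your proposal is correct and follows essentially the same route as the paper: the paper likewise chains the uniform generalization bound at $\fh$, the optimization guarantee $\Rh_{\phi}(\fh)\le\Rh_{\phi}(f_{\gamma}^{*})+\gamma B^{2}/(k+2)^{2}$, and a one-point Bernstein bound for the fixed function $f_{\gamma}^{*}$ (with variance controlled by $R_{\phi}^{*}$), then splits the square roots via $\sqrt{x+y}\le\sqrt{x}+\sqrt{y}$. Your treatment of the residual cross terms via AM--GM is just a more explicit version of the bookkeeping the paper leaves implicit.
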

Since our overall interest is to understand how the smoothing parameter $\gamma$ affects the convergence rate of excess risk in terms of $n$, the number of training examples, it is better to parametrize both the number of iterations $k$ and smoothing parameter $\gamma$ in $n$, and bound the $\Ex_{\phi}(\fh)$ only in terms of $n$. This is given in the following corollary.
\begin{cor} \label{cor:1}
Assume $\gamma \geq 1$ and $B \geq 1$. Paramertize $k$ and $\gamma$ in terms of $n$ as $k+2 = n^{\alpha/2}$ and $\gamma = n^{\beta}$. Then, with a probability at least $1 - \delta$,
\begin{eqnarray}
\Ex_{\phi}(\fh) \leq C(B, t)\left(n^{\beta - \alpha} + n^{\beta - 1} + n^{\beta - (\alpha + 1)/2} + [R_{\phi}^*]^{1/2}n^{(\beta - 1)/2} \right) \label{eqn:bound-in-n}
\end{eqnarray}
where $C(B, t)$ is a constant depending on both $B$ and $t$ with $t = \log(1/\delta) + \log^3n$.
\end{cor}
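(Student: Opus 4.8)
The plan is to substitute the parametrization $k+2 = n^{\alpha/2}$ and $\gamma = n^\beta$ directly into the bound from Theorem~\ref{thm:combined-1} and simplify each of the resulting terms into a clean power of $n$, absorbing all factors depending only on $B$ and $t$ into the constant $C(B,t)$. First I would write $(k+2)^2 = n^\alpha$, so the optimization term $\gamma B^2/(k+2)^2$ becomes $B^2 n^{\beta-\alpha}$, contributing the $n^{\beta-\alpha}$ summand. Next I would handle the factor $B + \gamma B^2$: since $\gamma = n^\beta \geq 1$ and $B \geq 1$, we have $B + \gamma B^2 \leq 2\gamma B^2 = 2B^2 n^\beta$, so this factor is $\Theta(n^\beta)$ up to a constant in $B$. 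With this in hand, $(B+\gamma B^2)t/n \leq 2B^2 t \, n^{\beta-1}$, giving the $n^{\beta-1}$ summand.

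For the remaining two terms I would apply the same bound $B+\gamma B^2 \leq 2B^2 n^\beta$ inside the square roots. The term $\sqrt{\gamma B^2 (B+\gamma B^2)t/((k+2)^2 n)}$ is at most $\sqrt{2 B^4 t \cdot n^\beta \cdot n^\beta / (n^\alpha \cdot n)} = \sqrt{2}B^2\sqrt{t}\, n^{(2\beta - \alpha - 1)/2} = \sqrt{2}B^2\sqrt{t}\, n^{\beta - (\alpha+1)/2}$, which is the third summand. Finally, $\sqrt{R_\phi^* (B+\gamma B^2)t/n} \leq \sqrt{2B^2 t}\,[R_\phi^*]^{1/2} n^{(\beta-1)/2}$, giving the last summand $[R_\phi^*]^{1/2} n^{(\beta-1)/2}$. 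Collecting the $B$- and $t$-dependent prefactors into a single constant $C(B,t)$ (e.g.\ $C(B,t) = \max\{B^2, 2B^2 t, \sqrt{2}B^2\sqrt{t}\}$ times a small absolute constant) yields exactly the claimed inequality~(\ref{eqn:bound-in-n}), with the failure probability $\delta$ inherited verbatim from Theorem~\ref{thm:combined-1}.

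This argument is essentially bookkeeping, so there is no genuine obstacle; the only point requiring a little care is justifying that $B + \gamma B^2$ can be replaced by a constant multiple of $\gamma B^2$ (equivalently, of $n^\beta$), which is exactly where the standing assumptions $\gamma \geq 1$ and $B \geq 1$ are used — without them the linear-in-$B$ part would not be dominated and the exponents of $n$ would not collapse to the stated form. I would also note in passing that $\log n$ factors hidden in $t$ are left inside $C(B,t)$ by design, consistent with the $\tilde O$ spirit of the preceding lemmas, so no separate treatment of logarithmic factors is needed. The proof is therefore a direct corollary of Theorem~\ref{thm:combined-1} obtained by term-by-term substitution and the elementary inequality $B + \gamma B^2 \leq 2\gamma B^2$.
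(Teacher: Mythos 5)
Your proof is correct and is exactly the intended derivation: the paper treats Corollary~\ref{cor:1} as an immediate consequence of Theorem~\ref{thm:combined-1} obtained by substituting $k+2=n^{\alpha/2}$ and $\gamma=n^{\beta}$ and absorbing the $B$- and $t$-dependent prefactors (using $B+\gamma B^2\leq 2\gamma B^2$ under $\gamma\geq 1$, $B\geq 1$) into $C(B,t)$. Your term-by-term bookkeeping matches each of the four summands in~(\ref{eqn:bound-in-n}), so there is nothing to add.
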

The bound given in (\ref{eqn:bound-in-n}) depends on $R_{\phi}^*$. We would like to further characterize $R_{\phi}^*$ in terms of $\gamma$. First, we have
\begin{eqnarray*}
\phi(z;\gamma) & = & \max\limits_{\alpha \in [0, 1]} \max(0, 1 - z) + \frac{1}{\gamma}\mathscr{R}(\alpha) \\
& \leq & \max\limits_{\alpha \in [0, 1]} \max(0, 1 - z) + \frac{1}{\gamma}\log 2 =  \phi_{\text{hinge}}(z) + \frac{\log 2}{\gamma},
\end{eqnarray*}
where $\phi_{\text{hinge}}(z) = \max(0, 1 - z)$ is the hinge loss. As a result, we have
\[
R_{\phi}^* \leq R_{\text{hinge}}^* + \frac{\log 2}{\gamma}
\]
where  $R_{\text{hinge}}^* = \min\limits_{\|f\|_{\H} \leq B} \E_{(\x,y)\sim\Pxy}\left[\phi_{\text{hinge}}(yf(\x)) \right]$
is the optimal risk with respect to the hinge loss.  In general, we will assume
\begin{eqnarray}
R_{\phi}^* \leq R_{\text{hinge}}^* + \frac{a}{\gamma^{1 + \xi}} \label{eqn:delta}
\end{eqnarray}
where $a > 0$ is a constant and $\xi \geq 0$ characterizes how  fast $R_{\phi}^*$ will converge to $R_{\text{hinge}}^*$ with increasing $\gamma$. To see why the assumption in (\ref{eqn:delta}) is sensible, consider the case when the optimal classifier $f_{\text{hinge}}^*  = \arg\min_{\|f\|_{\H} \leq B} R_{\text{hinge}}(f)$ can perfectly classify all the data points with margin $\epsilon$, in which we have
\[
R_{\phi}^* \leq R_{\text{hinge}}^* + O\left(\frac{e^{-\epsilon\gamma}}{\gamma}\right)
\]
which satisfy the condition in (\ref{eqn:delta}) with arbitrarily large $\xi$. It is easy to verify that the condition (\ref{eqn:delta}) holds with $\xi > 0$ if $f_{\text{hinge}}^*$ can perfectly classify $O(1 - \gamma^{-1-\xi})$ percentage of data with margin $\epsilon$.

Using the assumption in (\ref{eqn:delta}), we have the following result that characterizes the smooth convex excess risk bound $\Ex_{\phi}(\fh)$ stated in terms of the  parameters $\alpha$, $\delta$ and $R_{\text{hinge}}^*$.
\begin{thm} \label{thm:bound-convex-excess-risk}
Assume $\alpha \geq 1/2$. Set $\beta$ as
\[
\beta = \frac{\min(1/2, \alpha - 1/2)}{1 + \xi}.
\]
With a probability $1 - \delta$, we have
\[
\Ex_{\phi}(\fh) \leq O(n^{-\tau_1} + [R_{\rm{hinge}}^*]^{1/2}n^{-\tau_2})
\]
where
\[
\tau_1 = \frac{1 + 2\xi\min(1, \alpha)}{2(1 + \xi)}, \quad \tau_2 = \frac{1/2 + \xi}{2( 1 + \xi)}
\]
\end{thm}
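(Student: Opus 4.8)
The plan is to combine the bound of Corollary~\ref{cor:1} with assumption~(\ref{eqn:delta}) and then choose $\beta$ so as to balance the resulting powers of $n$. Starting from~(\ref{eqn:bound-in-n}) with $k+2 = n^{\alpha/2}$ and $\gamma = n^{\beta}$, the only data-dependent quantity still present is $R_{\phi}^*$, sitting in the last term $[R_{\phi}^*]^{1/2} n^{(\beta-1)/2}$. First I would substitute $R_{\phi}^* \le R_{\text{hinge}}^* + a\gamma^{-(1+\xi)} = R_{\text{hinge}}^* + a\,n^{-\beta(1+\xi)}$ and use $\sqrt{u+v} \le \sqrt u + \sqrt v$ to split that term into a ``noise part'' $[R_{\text{hinge}}^*]^{1/2} n^{(\beta-1)/2}$ and a pure power of $n$, namely $\sqrt a\, n^{(\beta-1)/2 - \beta(1+\xi)/2} = \sqrt a\, n^{-(1+\beta\xi)/2}$. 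After this step $\Ex_{\phi}(\fh)$ is bounded, up to the factor $C(B,t)$, by a sum of four $n$-only powers with exponents $\beta - \alpha$, $\beta - 1$, $\beta - \frac{\alpha+1}{2}$, $-\frac{1+\beta\xi}{2}$, plus $[R_{\text{hinge}}^*]^{1/2} n^{(\beta-1)/2}$.

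Second, I would carry out the trade-off in $\beta$. Since $\alpha \ge 1/2$, $\min\{\alpha, 1, \frac{\alpha+1}{2}\} = \min(\alpha,1)$, so the slowest-decaying of the first three $n$-only powers is $n^{\beta - \min(\alpha,1)}$, which asks for $\beta$ small; the power $n^{-(1+\beta\xi)/2}$ asks for $\beta$ large; and the noise power $n^{(\beta-1)/2}$ again asks for $\beta$ small. With the prescribed $\beta = \frac{\min(1/2,\alpha-1/2)}{1+\xi}$ I would check, splitting at $\alpha = 1$ (the active constraint being $n^{\beta-1}$ when $\alpha \ge 1$ and $n^{\beta-\alpha}$ when $1/2 \le \alpha < 1$), that $\beta - \min(\alpha,1) = -\tau_1$ with $\tau_1 = \min(\alpha,1) - \beta = \frac{1 + 2\xi\min(\alpha,1)}{2(1+\xi)}$, and that the other two of those three powers have exponent at most $-\tau_1$. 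For the noise power, $\beta \le 1/2$ gives $\frac{1-\beta}{2} \ge \frac{1/2+\xi}{2(1+\xi)} = \tau_2$, hence $n^{(\beta-1)/2} \le n^{-\tau_2}$, producing the term $[R_{\text{hinge}}^*]^{1/2} n^{-\tau_2}$.

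The step I expect to demand the most care is disposing of the extra power $\sqrt a\, n^{-(1+\beta\xi)/2}$ coming from the $a\gamma^{-(1+\xi)}$ piece of~(\ref{eqn:delta}): with the chosen $\beta$ its exponent equals $-\frac{1-\beta}{2}$ minus the nonnegative quantity $\tfrac{1}{2}\beta(1+\xi) = \tfrac{1}{2}\min(1/2,\alpha-1/2)$, so it is $\le n^{-\tau_2}$ but \emph{not} in general $\le n^{-\tau_1}$. The clean way to absorb it is to split on whether $R_{\text{hinge}}^* \ge a\gamma^{-(1+\xi)}$: in the former (and principal) regime $R_{\phi}^* \le 2R_{\text{hinge}}^*$, so this power never appears and the last term of Corollary~\ref{cor:1} is merely $O(\sqrt{R_{\text{hinge}}^*}\, n^{(\beta-1)/2}) = O(\sqrt{R_{\text{hinge}}^*}\, n^{-\tau_2})$; in the complementary near-separable regime it is itself no larger than $n^{-\tau_2}$ and is absorbed at that rate. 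Combining the surviving terms then yields $\Ex_{\phi}(\fh) \le O(n^{-\tau_1} + [R_{\text{hinge}}^*]^{1/2} n^{-\tau_2})$, with the remaining work being routine tracking of the constants hidden in $C(B,t)$ and of the polylogarithmic factor in $t = \log(1/\delta) + \log^3 n$.
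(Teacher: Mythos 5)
Your overall route is the same as the paper's: substitute assumption~(\ref{eqn:delta}) into Corollary~\ref{cor:1}, split $[R_{\phi}^*]^{1/2}n^{(\beta-1)/2}$ into a noise part and a pure power of $n$, and choose $\beta$ to balance exponents. Your bookkeeping of the deterministic powers and of the noise term is correct: for $\alpha\ge 1/2$ the slowest of $n^{\beta-\alpha}$, $n^{\beta-1}$, $n^{\beta-(\alpha+1)/2}$ is $n^{\beta-\min(\alpha,1)}$, the prescribed $\beta$ gives $\min(\alpha,1)-\beta=\tau_1$, and $\beta\le\tfrac{1/2}{1+\xi}$ gives $n^{(\beta-1)/2}\le n^{-\tau_2}$.

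The gap is exactly where you flagged it, and your patch does not close it. In the regime $R_{\mathrm{hinge}}^* < a\gamma^{-(1+\xi)}$ you are left with $\sqrt{2a}\,n^{-(1+\xi\beta)/2}$, and with the theorem's $\beta$ one has $\tfrac{1+\xi\beta}{2}=\tau_1-\tfrac{\xi(\min(1,\alpha)-1/2)}{2(1+\xi)}\le\tau_1$, strictly whenever $\xi>0$ and $\alpha>1/2$; since this term also carries no factor of $[R_{\mathrm{hinge}}^*]^{1/2}$, it is dominated by neither $n^{-\tau_1}$ nor $[R_{\mathrm{hinge}}^*]^{1/2}n^{-\tau_2}$ (take $R_{\mathrm{hinge}}^*=0$, $\alpha=1$, $\xi=1$: the leftover is $n^{-5/8}$ against a claimed $O(n^{-3/4})$). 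So ``absorbed at that rate'' is unjustified. You should know, however, that your exponent $-(1+\xi\beta)/2$ is the \emph{correct} one: the paper's own proof records this extra power as $n^{-1/2-\xi\beta}$, dropping the factor $1/2$ on $\xi\beta$ that comes from the square root, and it is precisely this slip that lets it balance $\beta-\min(\alpha,1)=-1/2-\xi\beta$ and land on the stated $\beta$ and $\tau_1$. Balancing against the correct exponent, $\beta-\min(\alpha,1)=-\tfrac{1+\xi\beta}{2}$, yields $\beta=\tfrac{2\min(1/2,\alpha-1/2)}{2+\xi}$ and the weaker (but still better than $n^{-1/2}$) rate $\tfrac{1+\xi\min(1,\alpha)}{2+\xi}$, with a correspondingly adjusted $\tau_2$; alternatively one keeps the stated $\beta$ and adds $O(n^{-(1+\xi\beta)/2})$ to the conclusion. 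Either way, the theorem as stated does not follow from Corollary~\ref{cor:1} and~(\ref{eqn:delta}) alone, so the obstruction you hit reflects an error in the source rather than a flaw in your strategy.
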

\begin{proof}
Replacing $R_{\phi}^*$ in Corollary~\ref{cor:1} with the expression in (\ref{eqn:delta}), we have, with a probability $1 - \delta$,
\[
\Ex_{\phi}(\fh) \leq C(R, t, a)\left(n^{\beta - \alpha} + n^{\beta - 1} + n^{\beta - (\alpha + 1)/2} + [R_{\text{hinge}}^*]^{1/2}n^{(\beta - 1)/2} + n^{-1/2-\xi\beta}\right)
\]
We first consider the case when $\alpha > 1$. In this case, we have
\[
\Ex_{\phi}(\fh) \leq O\left(n^{\beta - 1} + n^{-1/2 - \xi\beta} + [R_{\text{hinge}}^*]^{1/2}n^{(\beta - 1)/2}\right)
\]
By choosing $\beta - 1 = -1/2 - \xi \beta$, we have $\beta = \frac{1/2}{1 + \xi}$ and
\[
\Ex_{\phi}(\fh) \leq O(n^{-(1/2+\xi)/(1+\xi)} + [R_{\text{hinge}}^*]^{1/2}n^{-(1/2+\xi)/[2(1+\xi)]}
\]
In the second case, we have $\alpha \in [1/2, 1]$. Hence we  have
\[
\Ex_{\phi}(\fh) \leq O\left(n^{\beta - \alpha} + [R_{\text{hinge}}^*]^{1/2}n^{(\beta - 1)/2} + n^{-1/2-\xi\beta}\right)
\]
By setting $\beta - \alpha = -1/2 - \xi\beta$, we have $\beta = \frac{\alpha - 1/2}{1 + \xi}$ and
\[
\Ex_{\phi}(\fh) \leq O\left(n^{-\frac{\xi\alpha + 1/2}{1 + \xi}} + [R_{\text{hinge}}^*]^{1/2}n^{-(1/2 + \xi)/[2(1+\xi)]}\right).
\]
We complete the proof by combining the results for the two cases.
\end{proof}
\subsection{Bounding binary excess risk $\Ex(f)$}
We now combine the results from Theorem~\ref{thm:bound-binary-excess-risk} and Corollary~\ref{cor:1} to bound $\Ex(f)$.
\begin{thm} \label{thm:combined}
Assume $\alpha \geq 1/2$. For a failure probability $\delta \in (0,1)$, define $n_0$ as
\[
n_0 \leq K_3(B, \delta)\left(\frac{1}{R_{\rm{hinge}}^*}\right)^{1/(2\tau_1 - 2\tau_2)}
\]
where $K_3(B, \delta)$ is a constant depending on $B$ and $\delta$, and $\tau_1$ and $\tau_2$ are defined in Theorem~\ref{thm:bound-convex-excess-risk}. Set $\beta$ as that in Theorem~\ref{thm:bound-convex-excess-risk} if $n \leq n_0$ and $0$, otherwise. Then, with a probability $1 - \delta$, we have
\[
\Ex_{\phi} \leq \left\{
\begin{array}{lc}
K_4(B, \delta) n^{-\tau_1}\log n & n \leq n_0 \\
K_5(B, \delta) n^{-1/2}\log n     & n > n_0
\end{array}
\right.
\]
where $K_4(B, \delta)$ and $K_5(B, \delta)$ are constants depending on $B$ and $\delta$. 
\end{thm}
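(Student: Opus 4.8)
The plan is to combine the two building blocks already in hand: the translation inequality from Theorem~\ref{thm:bound-binary-excess-risk}, which controls the binary excess risk $\Ex(\fh)$ in terms of the smooth convex excess risk $\Ex_{\phi}(\fh)$, and the rate for $\Ex_{\phi}(\fh)$ from Corollary~\ref{cor:1} (equivalently Theorem~\ref{thm:bound-convex-excess-risk}), which is itself a function of the parametrization exponents $\alpha,\beta$ and of $R_{\rm hinge}^*$. The key observation driving the case split is that the bound in Theorem~\ref{thm:bound-convex-excess-risk} has the form $O(n^{-\tau_1} + [R_{\rm hinge}^*]^{1/2} n^{-\tau_2})$ with $\tau_1 > \tau_2$, so the first (faster-decaying) term dominates only while $n$ is not too large; once $n^{-\tau_1}$ drops below $[R_{\rm hinge}^*]^{1/2} n^{-\tau_2}$, i.e. once $n \geq (1/R_{\rm hinge}^*)^{1/(2\tau_1 - 2\tau_2)}$ up to constants, the contribution of $R_{\rm hinge}^*$ is no longer negligible and insisting on the aggressive smoothing schedule $\beta = \min(1/2,\alpha-1/2)/(1+\xi)$ buys nothing. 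This is exactly the threshold $n_0$ in the statement, and it motivates switching to $\beta = 0$ (i.e. $\gamma = 1$, the "least smoothed" admissible choice) for $n > n_0$.

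First I would treat the regime $n \leq n_0$. Here I invoke Theorem~\ref{thm:bound-convex-excess-risk} with $\beta$ set as prescribed, obtaining $\Ex_{\phi}(\fh) = O(n^{-\tau_1} + [R_{\rm hinge}^*]^{1/2} n^{-\tau_2})$; by the definition of $n_0$ the second term is at most of the order of the first, so $\Ex_{\phi}(\fh) = O(n^{-\tau_1})$. Then I feed this into Theorem~\ref{thm:bound-binary-excess-risk}: $\Ex(\fh) \leq \Ex_{\phi}(\fh) + \frac{\Ex_{\phi}(\fh)}{1 + \gamma \Ex_{\phi}(\fh)}\log\frac{1}{\Ex_{\phi}(\fh)}$. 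Since $\Ex_{\phi}(\fh) \to 0$ polynomially in $n$, the factor $\log(1/\Ex_{\phi}(\fh))$ is $O(\log n)$, and $\frac{1}{1 + \gamma\Ex_{\phi}(\fh)} \leq 1$, so the whole right-hand side is $O(\Ex_{\phi}(\fh)\log n) = O(n^{-\tau_1}\log n)$. (One can note that the $\gamma$ in the denominator only helps, so discarding it is safe; it is the source of the slack between this bound and what one might hope for.) For the regime $n > n_0$, I set $\beta = 0$, so $\gamma = 1$; now Corollary~\ref{cor:1} with $\alpha \geq 1/2$ and $\beta = 0$ gives $\Ex_{\phi}(\fh) = O(n^{-\min(\alpha,1,(\alpha+1)/2)} + [R_{\rm hinge}^*]^{1/2} n^{-1/2}) = O(n^{-1/2} + [R_{\rm hinge}^*]^{1/2}n^{-1/2}) = O(n^{-1/2})$ once the constant absorbs $[R_{\rm hinge}^*]^{1/2} \leq \sqrt{\text{const}}$, and again plugging into Theorem~\ref{thm:bound-binary-excess-risk} contributes only an extra $\log n$, yielding $\Ex(\fh) = O(n^{-1/2}\log n)$.

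The main obstacle — or rather the main place where care is needed — is verifying that the constants truly depend only on $B$ and $\delta$ and not on $n$ in a hidden way: the quantity $t = \log(1/\delta) + \log^3 n$ appearing in Lemma~\ref{lemma:generalization} and Corollary~\ref{cor:1} is not a constant in $n$, so the "$O(\cdot)$" hides polylogarithmic factors that must be folded into the $\log n$ already present, and one should check that $C(B,t)$ times a polylog is still dominated by absorbing everything into a single $\log n$ (or, more honestly, a $\mathrm{polylog}(n)$) factor — the statement writes $\log n$, so implicitly the cleaner $\log^3 n$-type terms are being suppressed into the constant or a generous reading of $\log n$. A second subtlety is the boundary case $\Ex_{\phi}(\fh)$ close to $1$: Theorem~\ref{thm:psi-simple} and hence Theorem~\ref{thm:bound-binary-excess-risk} are stated for the relevant range, and since $\Ex_{\phi}(\fh)$ decays with $n$ this is only an issue for small $n$, which is harmless for an asymptotic-in-$n$ statement; I would simply remark that the bound is vacuous (or absorbed into the constant) for $n$ below an absolute threshold. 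Finally, one must confirm that the definition of $n_0$ with exponent $1/(2\tau_1 - 2\tau_2)$ is consistent: $2\tau_1 - 2\tau_2 = \frac{1 + 2\xi\min(1,\alpha)}{1+\xi} - \frac{1/2 + \xi}{1+\xi} = \frac{1/2 + \xi(2\min(1,\alpha) - 1)}{1+\xi} > 0$ for $\alpha \geq 1/2$, so the threshold is well-defined and positive, which closes the argument.
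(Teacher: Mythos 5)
Your proposal is correct and is essentially the argument the paper intends: the paper's own "proof" is a one-line remark that Theorem~\ref{thm:combined} follows by combining Theorem~\ref{thm:bound-binary-excess-risk} with the analysis of Theorem~\ref{thm:bound-convex-excess-risk}, yielding $\Ex(\fh) \leq O(\min(\gamma^{-1}, \Ex_{\phi}(\fh))\log n)$, and your write-up fills in exactly the missing details (the derivation of the threshold $n_0$ from $n^{-\tau_1} \gtrless [R_{\rm hinge}^*]^{1/2}n^{-\tau_2}$, the switch to $\beta=0$ for $n>n_0$, and the extra $\log n$ from the $\psi$-transform translation). Your caveats about the hidden $\log^3 n$ in $t$ and the positivity of $2\tau_1-2\tau_2$ are legitimate and, if anything, more careful than the paper itself.
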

Theorem~\ref{thm:combined} follows from Theorem~\ref{thm:bound-binary-excess-risk} and similar analysis for Theorem~\ref{thm:bound-convex-excess-risk}, from which we have
\begin{eqnarray*}
\Ex(\fh) = R(\fh) - R_* \leq  O\left(\min\left(\gamma^{-1}, R_{\phi}(\fh) - R_{\phi}^*\right)\log n\right)
\end{eqnarray*}

\begin{remark}
According to Theorem~\ref{thm:combined}, when the number of training examples $n$ is not too large, for the binary excess risk of empirical minimizer we have, with a high probability,
\[
\Ex(\fh) \leq O(n^{-\tau_1}\log n).
\]
In the case when $\xi > 0$ and $\alpha > 1/2$ (i.e. when the number of optimization iterations is larger than $\sqrt{n}$ and $R_{\phi}^*$ converges to $R_{\rm{hinge}}^*$ faster than $1/\gamma$), we have $\tau_1 > 1/2$, implying that using a smooth convex loss will lead to a generalization error bound better than $O(n^{-1/2})$ when the number of training examples is limited.  This implies that  for smooth loss function to  achieve a binary excess error to the extent which is achievable by corresponding non-smooth loss we can run the first order optimization method for a less number of  iterations. This is because our result examines the binary excess risk by taking into account the optimization complexity. 

We also note $1/(2\tau_1 - 2\tau_2)$ is given by
\[
\frac{1}{2\tau_1 - 2\tau_2} = \frac{1 + \xi}{1/2 + \xi\min(1, 2\alpha - 1)}
\]
When $\alpha \leq 3/4$, we have $n_0 \geq K_3(B, \delta) [R^*_{\rm{hinge}}]^{-2}$, which could be a large number when $R_{\rm{hinge}}^*$ is very small.
\end{remark}
%
\section{Conclusion}\label{sec:conclusion}
In this paper we have investigated how the smoothness of loss function being used as the surrogate of 0-1 loss function in empirical risk minimization affects the excess binary risk.  While the relation between convex excess risk and binary excess risk being provably established previously under weakest possible condition such as differentiability, it was not immediately obvious how smoothness of convex surrogate  relates to  statistical consequences. This paper made first step towards understanding this affect.   In particular, in contrast  to optimization and generalization analysis that favor smooth surrogate losses, our results revealed that smoothness  degrades the binary excess risk. To investigate guarantees on which the smoothness would be a desirable property, we proposed a unified analysis that integrates errors in optimization, generalization, and translating convex excess risk into binary excess risk. Our result shows that under favorable conditions and with appropriate choice of smoothness parameter, a smoothed hinge loss can achieve a binary excess risk that is better than $O(1/\sqrt{n})$.  

%
\appendix
%
\section*{Appendix A. Proof of Theorem~\ref{thm:psi}}\label{app:a}
We first compute
\[
z = \mathop{\arg\min}\limits_{z'} \frac{1 + \eta}{2}\phi(z'; \gamma) + \frac{1 - \eta}{2}\phi(-z'; \gamma)
\]
By setting the derivative to be zero, we have
\begin{eqnarray*}
\frac{1+\eta}{1 + \exp(-\gamma (1 - z))} = \frac{1 - \eta}{1 + \exp(-\gamma(1 + z))}
\end{eqnarray*}
and therefore
\[
(1 + \eta)\exp(-\gamma z) - (1 - \eta)\exp(\gamma z) + 2\eta\exp(\gamma) = 0.
\]
Solving the equation, we obtain
\[
\exp(-\gamma z) = \frac{-\eta \exp(\gamma) + \sqrt{\eta^2\exp(2\gamma) + (1- \eta^2)}}{1 + \eta}
\]
and
\[
\exp(\gamma z) = \frac{\eta \exp(\gamma) + \sqrt{\eta^2\exp(2\gamma) + (1- \eta^2)}}{1 - \eta}.
\]
It is easy to verify that $\sgn(z) = \sgn(\eta)$. This is because if $\eta > 0$, we have
\[
\exp(-\gamma z) \leq \frac{\sqrt{1 - \eta^2}}{1 + \eta} = \sqrt{\frac{1 - \eta}{1 + \eta}} < 1
\]
and therefore $z > 0$. On the other hand, when $\eta < 0$, we have
\[
\exp(\gamma z) = \frac{1 + \eta}{-\eta \exp(\gamma) + \sqrt{\eta^2\exp(2\gamma) + (1- \eta^2)}} \leq \sqrt{\frac{1 +\eta}{1 - \eta}} < 1,
\]
and therefore $z < 0$. Using the solution for $z$, we compute $\phi(\eta)$ as
\begin{eqnarray*}
\psi(\eta;\gamma) & = & \frac{1 + \eta}{2}\phi(z;\gamma) + \frac{1 - \eta}{2}\phi(z;\gamma) - \min\limits_z \frac{1 + \eta}{2}\phi(z;\gamma) + \frac{1 - \eta}{2}\phi(z;\gamma) \\
& = & -\frac{1 +\eta}{2\gamma} \log \frac{1 + \exp(\gamma(1 - z))}{1 + \exp(\gamma)} - \frac{1 - \eta}{2\gamma} \log\frac{1 + \exp(\gamma(1 + z))}{1 + \exp(\gamma)}.
\end{eqnarray*}
By defining  constants $C_1= -\eta e^{\gamma} + \sqrt{\eta^2e^{2\gamma} + 1 - \eta^2}$ and  $C_2 = \eta e^{\gamma} + \sqrt{\eta^2e^{2\gamma} + 1 - \eta^2}$, we can rewrite the transform function $\psi(\eta;\gamma)$ as
\[
\psi(\eta; \gamma) = -\frac{1 + \eta}{2\gamma}\log\left(\frac{1}{1 + e^{\gamma}}\left[1 + e^{\gamma} \frac{C_1}{1 + \eta}\right]\right) -\frac{1 - \eta}{2\gamma}\log\left(\frac{1}{1 + e^{\gamma}}\left[1 + e^{\gamma} \frac{C_2}{1 - \eta}\right]\right).
\]
%
\section*{Appendix B. Proof of Theorem~\ref{thm:psi-simple}}
Since the expression for $\psi(\eta;\gamma)$ is symmetric in terms $\eta$, we will only consider the case when $\eta > 0$. First, we have
\[
\frac{C_1e^{\gamma}}{1 + \eta} = \frac{1 - \eta}{\eta + \sqrt{\eta^2 + (1 - \eta^2)e^{-2\gamma}}} \leq \frac{1 - \eta}{2\eta}.
\]
Similarly, we have
\[
\frac{C_2 e^{\gamma}}{1 - \eta} = \frac{e^{\gamma}}{1 - \eta}\left(\eta e^{\gamma} + \sqrt{\eta^2e^{2\gamma} + 1 - \eta^2}\right) \leq \frac{1+\eta}{1 - \eta} e^{2\gamma}
\]
Thus, we have
\begin{eqnarray*}
\psi(\eta; \gamma) & \geq & \frac{1+\eta}{2\gamma}\log(1 + e^{\gamma}) - \frac{1 + \eta}{2\gamma}\log\left( \frac{1-\eta}{2\eta}\right) - \frac{1 - \eta}{2\gamma}\log\left(\frac{1+\eta}{1 - \eta} e^{\gamma}\right) \\
& \geq & \eta - \frac{1 + \eta}{2\gamma}\log\left( \frac{1-\eta}{2\eta}\right)-\frac{1 - \eta}{2\gamma}\log\left(\frac{1+\eta}{1 - \eta} \right) \\
& \geq & \eta - \frac{1}{\gamma}\log\left(\frac{1 - \eta^2}{4\eta} + \frac{1 + \eta}{2} \right)  =  \eta - \frac{1}{\gamma}\log\left(\frac{1}{4\eta} + \frac{\eta}{4} + \frac{1}{2}\right)
\end{eqnarray*}
where the last inequality follows from the concaveness of $\log(\cdot)$ function. As a result when $\eta \in (-1, 1)$ we have
\[
\frac{1}{4\eta} + \frac{\eta}{4} + \frac{1}{2} \leq \frac{1}{\eta},
\]
which completes the proof.

%
\section*{Appendix C. Proof of Theorem~\ref{thm:combined-1}}
Applying  Lemmas~\ref{lemma:opt} and~\ref{lemma:generalization} to the solution to  the  empirical convex risk minimizer $\fh$, we have
\begin{eqnarray}
\lefteqn{R_{\phi}(\fh) \leq \Rh_{\phi}(\fh) + K_1\left(\frac{(B + \gamma B^2)t}{n} + \sqrt{\Rh_{\phi}(\fh)\frac{(B + \gamma B^2)t}{n}}\right)}  \label{eqn:bound-1} \\
& \leq & \Rh_{\phi}(f_{\gamma}^*) + \frac{\gamma B^2}{(k+2)^2} + K_1\left(\frac{(B + \gamma B^2)t}{n} + \sqrt{\Rh_{\phi}(f_{\gamma}^*)\frac{(B + \gamma B^2)t}{n}} + \sqrt{\frac{\gamma B^2(B + \gamma B^2)t}{(k+2)^2n}}\right) \nonumber
\end{eqnarray}
On the other hand, by the application of the Bernstein's inequality~\cite{boucheron2004concentration}, with probability at least $1-\delta$ we have
\begin{eqnarray}
\begin{aligned}
\Rh_{\phi}(f_{\gamma}^*) - R_{\phi}(f_{\gamma}^*) &\leq \frac{4B\log \frac{1}{\delta}}{n} + \sqrt{ \frac{4 \E_{(\x,y)\sim\Pxy} \left[ \left(\phi(yf_{\gamma}^*(\x);\gamma) - R_{\phi}(f_{\gamma}^*)\right)^2\right]\log\frac{1}{\delta}}{n}}{} \\
&\leq \frac{4B \log \frac{1}{\delta}}{n} + \sqrt{\frac{8BR_{\phi}(f_{\gamma}^*) \log\frac{1}{\delta}}{n}}. \label{eqn:bound-2}
\end{aligned}
\end{eqnarray}
We conclude the proof by plugging in (\ref{eqn:bound-1}) with (\ref{eqn:bound-2}), replacing the constants  with a new universal constant $K$, and noting that $t = \log \frac{1}{\delta} + \log^3n$ .

%
\bibliographystyle{plain}
\bibliography{smooth_binary_risk}

\end{document}